\documentclass{article}

\usepackage[preprint,nonatbib]{neurips_2026}

\usepackage[T1]{fontenc}
\usepackage[utf8]{inputenc}
\usepackage{microtype}
\usepackage{amsmath,amssymb,amsthm}
\usepackage{booktabs}
\usepackage{graphicx}
\usepackage{enumitem}
\usepackage[table]{xcolor}
\usepackage{hyperref}
\usepackage[capitalize,noabbrev]{cleveref}
\usepackage{algorithm}
\usepackage{algorithmic}
\usepackage{longtable}
\usepackage{multirow}
\usepackage{subcaption}
\usepackage{tabularx}
\usepackage{makecell}
\usepackage{tcolorbox}

\hypersetup{
  colorlinks=true,
  linkcolor=blue!60!black,
  citecolor=blue!60!black,
  urlcolor=blue!60!black,
}

\newtheorem{proposition}{Proposition}

\newtheorem{definition}{Definition}

\title{Select Smarter, Not More: Prompt-Aware Evaluation Scheduling\\with Submodular Guarantees}
\author{
  Xiaoyu Ma\textsuperscript{1,$\dagger$}\quad
  Yiwen Li\textsuperscript{1,$\dagger$}\quad
  Haoyue Liu\textsuperscript{1,$\dagger$}\\[2pt]
  Zhichao Wang\textsuperscript{1}\quad
  Ye Chen\textsuperscript{2}\quad
  Yongxin Guo\textsuperscript{3}\quad
  Xiaoying Tang\textsuperscript{1,*}\\[6pt]
  {\normalfont\small \textsuperscript{1}The Chinese University of Hong Kong, Shenzhen}\\
  {\normalfont\small \textsuperscript{2}Xi'an Jiaotong University\quad
  \textsuperscript{3}Taobao and Tmall Group}\\[3pt]
  {\normalfont\small \texttt{\{xiaoyuma, yiwenli, haoyueliu, zhichaowang\}@link.cuhk.edu.cn}}\\
  {\normalfont\small \texttt{chenyecharlie@stu.xjtu.edu.cn}}\\
  {\normalfont\small \texttt{guoyongxin.gyx@taobao.com}\quad
  \texttt{tangxiaoying@cuhk.edu.cn}}\\[3pt]
  {\normalfont\small \textsuperscript{$\dagger$}Equal contribution.\quad
  \textsuperscript{*}Corresponding author.}
}

\begin{document}
\maketitle

\begin{abstract}
Automatic prompt optimization (APO) hinges on the quality of its evaluation signal, yet scoring every prompt candidate on the full training set is prohibitively expensive. Existing methods either fix a single evaluation subset before optimization begins (principled but prompt-agnostic) or adapt it heuristically during optimization (flexible but unstable and lacking formal guarantees). We observe that APO naturally maps to an \emph{online adaptive testing} problem: prompts are examinees, training examples are test items, and the scheduler should select items that best discriminate among the strongest candidates. This insight motivates \textbf{Prompt-Aware Online Evaluation Scheduling (\textsc{POES})}, which integrates an IRT-based discrimination utility, a facility-location coverage term, and switching-cost-aware warm-start swaps into a unified objective that is provably monotone submodular---yielding a $(1{-}1/e)$ greedy guarantee for cold starts and bounded drift for warm-start updates. An adaptive controller modulates exploration--exploitation balance based on optimization progress. Across 36 tasks spanning three benchmark families, \textsc{POES} achieves the highest overall average accuracy ($+6.2\%$ over the best baseline) with negligible token overhead (${\sim}$4\%) at the same evaluation budget. Moreover, principled selection at $k{=}20$ examples matches or exceeds the performance of na\"ive evaluation at $k{=}30$\,--\,50, reducing token consumption by 35\,--\,60\%---showing that \emph{selecting smarter} is more effective than \emph{selecting more}. Our results demonstrate that evaluation scheduling is a first-class component of APO, not an implementation detail.
\end{abstract}

%% ====================================================================
\section{Introduction}
\label{sec:intro}
%% ====================================================================

Large language models (LLMs) are highly sensitive to prompt wording, formatting, and task framing~\cite{zhou2023large,yang2023large,sclar2024quantifying}. Since the seminal demonstrations of in-context learning~\cite{brown2020language} and chain-of-thought prompting~\cite{wei2022chain}, this sensitivity has driven rapid progress in \emph{automatic prompt optimization} (APO), where a meta-optimizer iteratively proposes prompt candidates and retains those that perform best on a task-specific evaluation set~\cite{pryzant2023automatic,guo2024evoprompt,khattab2023dspy}. The quality of APO hinges on the quality of its feedback signal---yet evaluating each candidate on the full training set is usually infeasible due to LLM inference cost.

\textbf{The evaluation subset bottleneck.} Most APO pipelines score prompts on a small subset of $k \ll N$ examples per round, raising a fundamental question: \emph{how should the evaluation subset evolve so that optimization is both effective and stable?} \emph{Static} methods like SESS~\cite{nian2026submodular} select one subset via submodular optimization before optimization begins---principled but \emph{prompt-agnostic}. \emph{Dynamic heuristic} methods like IPOMP~\cite{dong2025model} adaptively refine the subset but lack formal guarantees and can change dramatically between rounds.

\textbf{Key insight.} We observe that APO can be naturally cast as an \emph{online adaptive testing} problem: prompts are examinees, training examples are test items, and the scheduler's goal is to select items that best \emph{discriminate} among the strongest prompt candidates. Just as computerized adaptive testing (CAT) selects questions to maximally differentiate examinees, an APO scheduler should select evaluation examples that maximally separate competing prompts---and it must do so \emph{online}, since the prompt population evolves over optimization rounds.

\textbf{Empirical motivation.} \Cref{fig:intro-motivation} illustrates both the accuracy and efficiency advantages. On BBH Navigate (\Cref{fig:intro-motivation}a), static methods plateau early while our prompt-aware scheduler continues improving, achieving +8.3\% over the best baseline. \Cref{fig:intro-motivation}b shows the accuracy--cost tradeoff: \textsc{POES} at $k{=}20$ achieves the highest mean accuracy (0.955) while using comparable tokens to baselines; even at \emph{half} the budget ($k{=}10$), it stays close to baseline accuracy (0.804 vs.\ 0.820) with 34\% fewer tokens.

\begin{figure}[t]
\centering
\includegraphics[width=\textwidth]{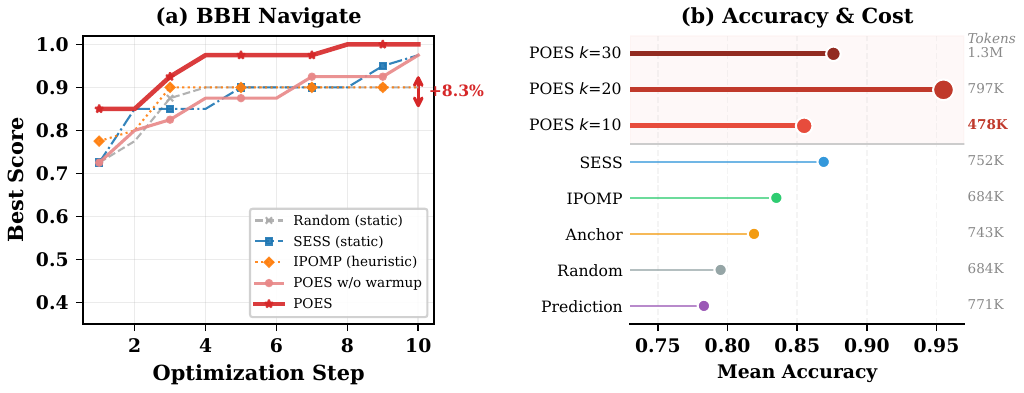}
\caption{\textbf{(a)} Optimization curves on BBH Navigate: static baselines plateau while \textsc{POES} (red) continues improving via prompt-aware subset adaptation, achieving +8.3\% over the best baseline. \textbf{(b)} Accuracy vs.\ token consumption: \textsc{POES} at $k{=}20$ dominates all baselines (high accuracy, moderate cost); at $k{=}10$ it stays close to baseline accuracy (0.804 vs.\ 0.820) with 34\% fewer tokens.}
\label{fig:intro-motivation}
\end{figure}

\textbf{Our approach and contributions.} This perspective motivates \textsc{POES}, which integrates: \textbf{(1)}~an IRT-based discrimination utility identifying examples that separate top prompts; \textbf{(2)}~facility-location coverage preventing subset collapse; \textbf{(3)}~switching-cost-aware warm-start swaps with provable drift bounds; and \textbf{(4)}~an adaptive controller modulating all parameters based on optimization progress. The unified objective is \emph{monotone submodular}, yielding a $(1{-}1/e)$ greedy guarantee for cold starts. On 36 tasks across three benchmark families and a $3 \times 2$ optimizer-model matrix, our method achieves the highest overall average ($+6.2\%$) with negligible token overhead (${\sim}$4\%). Principled selection at $k{=}20$ matches na\"ive evaluation at $k{=}30$\,--\,50, reducing token consumption by 35\,--\,60\%.

%% ====================================================================
\section{Related Work}
\label{sec:related}
%% ====================================================================

\paragraph{Automatic prompt optimization and evaluation subset selection.}
APO methods iteratively improve prompts via explicit search~\cite{zhou2023large,yang2023large}, gradient-free instruction editing~\cite{prasad2023grips}, natural-language feedback and textual gradients~\cite{pryzant2023automatic,yuksekgonul2024textgrad}, evolutionary operators~\cite{guo2024evoprompt,fernando2023promptbreeder}, black-box instruction optimization~\cite{chen2023instructzero}, strategic planning and pipeline compilation~\cite{wang2023promptagent,khattab2023dspy}, exemplar or ordering optimization~\cite{wu2024prompt}, best-arm-identification views of prompt search~\cite{shi2024efficient}, or compositional prompt-program discovery~\cite{liu2026adaptivepromptstructurefactorization}. Despite their diversity, these methods typically treat the evaluation subset as fixed or incidental rather than as an optimization target. Only two methods directly address subset scheduling: SESS~\cite{nian2026submodular} provides principled static selection via submodular optimization but cannot adapt to the evolving prompt population; IPOMP~\cite{dong2025model} adapts dynamically but lacks formal guarantees. Our method bridges both: combining IPOMP's adaptivity with SESS's formal guarantees, plus prompt-aware discrimination and stability control.

\paragraph{Adaptive testing, IRT, and submodular optimization.}
CAT selects test items to maximize Fisher information about examinee ability~\cite{lord2012applications,van2000computerized}. Recent work applies IRT to LLM benchmarking~\cite{polo2024tinybenchmarks,kipnis2024metabench,martinez2019item,zhuang2025position}, but all perform one-time static selection. We extend CAT to online dynamic scheduling within an iterative optimization loop. Our objective builds on classical submodular maximization and its practical variants~\cite{nemhauser1978analysis,krause2014submodular,mirzasoleiman2015lazier}, online submodular optimization~\cite{streeter2008online,golovin2014online}, and budgeted identification under uncertainty~\cite{kaufmann2013information}, while incorporating switching-cost-aware updates from bandits and online optimization~\cite{dekel2014bandits,bansal20152} and connections to coreset selection~\cite{mirzasoleiman2020coresets} and data pruning~\cite{maharana2023d2}.

\paragraph{Adjacent data selection literature.}
Our setting is also related to curriculum learning~\cite{10.1145/1553374.1553380}, active learning~\cite{sener2017active,ash2019deep}, and training-data subset selection or pruning~\cite{killamsetty2021glister,coleman2019selection,paul2021deep}. These methods allocate labeling or training compute, whereas \textsc{POES} allocates evaluation budget online over an evolving prompt population. The distinction matters because our selected subset shapes the optimizer's feedback signal rather than the model's parameter updates.

%% ====================================================================
\section{Method}
\label{sec:method}
%% ====================================================================

\subsection{Problem Formulation}
\label{sec:formulation}

Let $V{=}\{1,\dots,N\}$ be the training pool. At round $t$, the scheduler must choose $S_t \subseteq V$ with $|S_t|{=}k$, satisfying three desiderata: \textbf{(i)} \emph{informativeness}---selected examples should discriminate among top prompt candidates; \textbf{(ii)} \emph{coverage}---the subset should represent the full data manifold; \textbf{(iii)} \emph{stability}---the subset should not fluctuate arbitrarily, since erratic evaluation signals mislead the optimizer.

\begin{definition}[Online Evaluation Scheduling]
Given pool $V$ of size $N$, budget $k$, and $T$ optimization rounds, produce subsets $S_1, \dots, S_T$ with $|S_t|{=}k$ that maximize the quality of the prompt returned by the optimizer after $T$ rounds, as measured on a held-out test set.
\end{definition}

We optimize a composite set function at each round:
\begin{equation}
G_t(S) = \underbrace{\textstyle\sum_{i \in S} u_t(i)}_{\text{discrimination}} + \lambda_t \underbrace{\textstyle\sum_{j \in V} \max_{i \in S} s_t(i,j)}_{\text{coverage (facility-location)}},
\label{eq:core-objective}
\end{equation}
subject to $|S|{=}k$, where $u_t(i) \ge 0$ is the prompt-dependent discrimination utility, $s_t(i,j) \ge 0$ is pairwise similarity, and $\lambda_t$ balances the two terms. Rather than re-solving from scratch, we \emph{warm-start} from $S_{t-1}$ with bounded one-for-one swaps, ensuring gradual subset evolution.

\Cref{fig:architecture} illustrates the overall \textsc{POES} framework. At each optimization round, the APO optimizer generates prompt candidates, which are evaluated on the scheduler-selected subset $S_t$. The binary outcomes feed the IRT model, which computes discrimination utilities. These combine with facility-location coverage to form the composite objective $G_t$, optimized via switching-cost-aware swaps. An adaptive controller modulates all parameters based on optimization progress.

\begin{figure}[t]
\centering
\includegraphics[width=\textwidth]{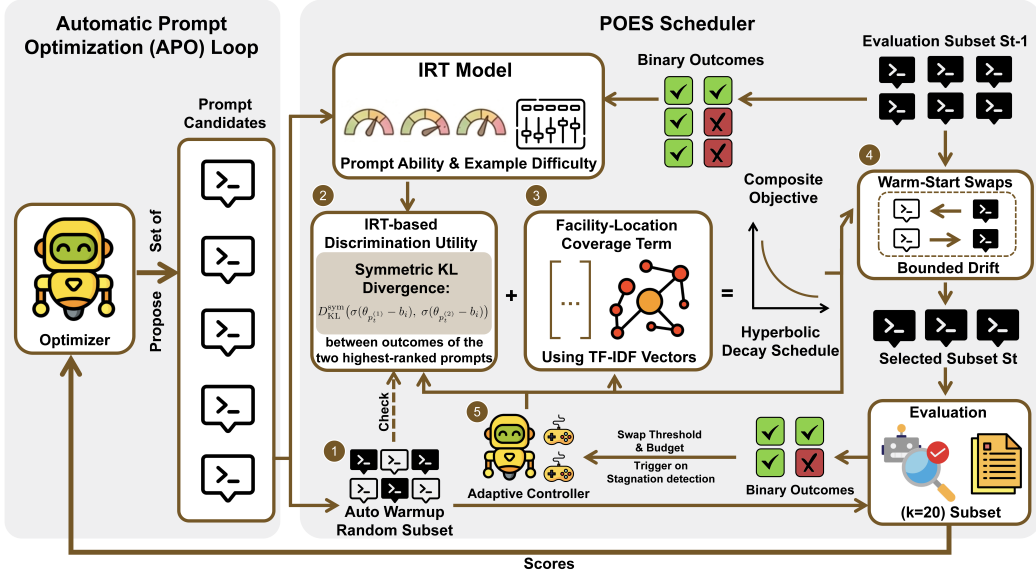}
\caption{Overview of the \textsc{POES} framework. The scheduler (dashed box) integrates five components: \textbf{(1)} auto warmup for noisy early rounds, \textbf{(2)} IRT-based discrimination utility, \textbf{(3)} facility-location coverage, \textbf{(4)} warm-start swap optimization of the composite objective $G_t$, and \textbf{(5)} an adaptive controller that adjusts $(\tau_t, B_t, \lambda_t)$. The selected subset $S_t$ feeds back to the evaluation module, closing the optimization loop.}
\label{fig:architecture}
\end{figure}

\subsection{Online Discrimination Utility}
\label{sec:disc-utility}

\textbf{Limitation of existing approaches.} Static methods (e.g., SESS) select examples based on data-side features alone, ignoring which examples are actually informative given the \emph{current} prompt candidates. As optimization progresses and the prompt population improves, the ``useful'' examples shift---but a static subset cannot follow.

\textbf{Our solution: IRT-based discrimination.} We model prompt-example interactions via item response theory: each prompt $p$ has ability $\theta_p$, each example $i$ has difficulty $b_i$, and $\Pr(y_{p,i}{=}1) = \sigma(\theta_p - b_i)$. Parameters are updated online via maximum likelihood over the cumulative binary outcome matrix, using L-BFGS with warm starts from the previous round's estimates. We deliberately choose the 1PL model over richer variants (e.g., 2PL with per-item discrimination) because it provides interpretable difficulty and ability estimates with fewer parameters to estimate from limited observations (${\sim}50$ prompts), remains computationally negligible (${<}100$ms per round), and avoids overfitting in early rounds when the outcome matrix is sparse. An empirical comparison confirms this choice: 1PL outperforms 2PL by +2.3pp on average across 5 tasks, with the largest gap on tasks with noisy observations (\Cref{fig:budget-k-ablation}c).

At round $t$, let $p^{(1)}_t, p^{(2)}_t$ be the two highest-ranked prompts. The discrimination utility of example $i$ is:
\begin{equation}
u_t(i) = D_{\mathrm{KL}}^{\mathrm{sym}}\bigl(\sigma(\theta_{p_t^{(1)}} - b_i),\; \sigma(\theta_{p_t^{(2)}} - b_i)\bigr).
\label{eq:disc-utility}
\end{equation}
Intuitively, $u_t(i)$ is large when example $i$ produces \emph{different outcomes} for the two top prompts, peaking when $b_i$ lies between $\theta_{p_t^{(1)}}$ and $\theta_{p_t^{(2)}}$---naturally identifying the ``decision boundary'' in ability space. The following proposition formalizes the connection to classical adaptive testing theory:

\begin{proposition}[Fisher information connection]
\label{prop:fisher}
Under the 1PL model, let $\Delta\theta_t = \theta_{p_t^{(1)}} - \theta_{p_t^{(2)}}$ and $\bar\theta_t = (\theta_{p_t^{(1)}} + \theta_{p_t^{(2)}})/2$. Then:
\begin{equation}
u_t(i) = (\Delta\theta_t)^2 \cdot \mathcal{I}(\bar\theta_t;\, b_i) + O\bigl((\Delta\theta_t)^4\bigr),
\label{eq:fisher}
\end{equation}
where $\mathcal{I}(\theta; b) = \sigma(\theta{-}b)(1{-}\sigma(\theta{-}b))$ is the Fisher information of item $i$ at ability $\theta$.
\end{proposition}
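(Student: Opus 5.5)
The plan is to avoid a generic second-order expansion of the divergence. Instead I would first get an \emph{exact} closed form for $u_t(i)$ by using the fact that, under the 1PL model, the logit of each success probability is linear in the ability. I take $D_{\mathrm{KL}}^{\mathrm{sym}}(p,q) = D_{\mathrm{KL}}(p\|q) + D_{\mathrm{KL}}(q\|p)$ for Bernoulli distributions, i.e.\ the Jeffreys divergence. If the paper instead means the average of the two directions, the same argument goes through with an extra factor $1/2$, which I would flag.

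\textbf{Step 1 (exact identity).} For Bernoulli parameters $p,q$, summing the two KL terms cancels the entropy-like pieces and leaves
\[
D_{\mathrm{KL}}^{\mathrm{sym}}(p,q) = (p-q)\bigl(\operatorname{logit} p - \operatorname{logit} q\bigr).
\]
This is a two-line algebraic check. Set $x_j = \theta_{p_t^{(j)}} - b_i$, so that $\operatorname{logit}\sigma(x_j) = x_j$ exactly. Then
\[
u_t(i) = \Delta\theta_t\,\bigl(\sigma(x_1) - \sigma(x_2)\bigr),
\]
where $x_1 - x_2 = \Delta\theta_t$. Writing $\bar x = \bar\theta_t - b_i$ and $h = \Delta\theta_t/2$, we have $x_1 = \bar x + h$ and $x_2 = \bar x - h$.

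\textbf{Step 2 (odd Taylor expansion).} The map $h \mapsto \sigma(\bar x + h) - \sigma(\bar x - h)$ is odd in $h$, so its Taylor expansion contains only odd powers:
\[
\sigma(\bar x + h) - \sigma(\bar x - h) = 2h\,\sigma'(\bar x) + \tfrac{h^3}{3}\,\sigma'''(\xi)
\]
for some $\xi$ between $\bar x - h$ and $\bar x + h$. The last term is the Lagrange remainder, obtained by applying Taylor's theorem to each side separately and combining. Substituting $h = \Delta\theta_t/2$ and multiplying by $\Delta\theta_t$ gives
\[
u_t(i) = (\Delta\theta_t)^2\,\sigma'(\bar x) + \tfrac{1}{24}(\Delta\theta_t)^4\,\sigma'''(\xi).
\]

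\textbf{Step 3 (identify Fisher information and bound the remainder).} Use $\sigma' = \sigma(1-\sigma)$, so that $\sigma'(\bar x) = \mathcal{I}(\bar\theta_t; b_i)$. All derivatives of $\sigma$ are bounded on $\mathbb{R}$; in particular $|\sigma'''| \le 1/8$. Hence the remainder is at most $(\Delta\theta_t)^4/192$, and this bound is uniform in $b_i$ and $\bar\theta_t$. This gives exactly the claimed $O\bigl((\Delta\theta_t)^4\bigr)$ term.

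\textbf{Main obstacle.} There is no real analytic difficulty once the exact identity of Step 1 is in hand. The main points of care are, first, pinning down the normalization convention for $D_{\mathrm{KL}}^{\mathrm{sym}}$, which fixes whether the leading constant is $1$ or $1/2$. Second, the remainder must be uniform rather than merely pointwise, and the global bound on $\sigma'''$ takes care of this. I would try Step 1 first because it removes any need to expand logarithms near $0$ or $1$. A direct expansion of each KL term would have remainders blowing up as $\sigma(x_j)\to 0$ or $1$.
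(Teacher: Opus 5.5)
Your argument is correct, but it takes a different route from the paper. The paper expands each directed divergence separately in the natural parameter $\eta=\theta-b_i$. It quotes the generic exponential-family approximation $D_{\mathrm{KL}}(p_1\|p_2)=\tfrac12(\Delta\theta_t)^2\,\mathcal{I}(\bar\theta_t;b_i)+O\bigl((\Delta\theta_t)^3\bigr)$, then asserts that the cubic terms cancel ``by symmetry'' when the two directions are added. Implicitly, swapping the two prompts sends $h\mapsto-h$, so the sum is even in $h$.

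You instead use the exact identity $D^{\mathrm{sym}}_{\mathrm{KL}}(p,q)=(p-q)(\mathrm{logit}\,p-\mathrm{logit}\,q)$. Under 1PL this collapses $u_t(i)$ to $\Delta\theta_t\bigl(\sigma(\bar x+h)-\sigma(\bar x-h)\bigr)$, so the evenness is visible and only a one-variable Taylor expansion of $\sigma$ is needed. This gives you three things the paper leaves implicit:
\begin{itemize}
\item the vanishing of the cubic term is proved rather than asserted;
\item the remainder is bounded by $(\Delta\theta_t)^4/192$, uniformly in $b_i$ and $\bar\theta_t$, which is what matters when comparing utilities across items;
\item you get the exact, non-asymptotic form $u_t(i)=\Delta\theta_t\int_{x_2}^{x_1}\sigma'(x)\,dx$. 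This is $(\Delta\theta_t)^2$ times the average of $\mathcal{I}(\cdot\,;b_i)$ between the two abilities, and it holds for any gap.
\end{itemize}
You also lose no generality. In any one-parameter exponential family the symmetrized KL equals $(\eta_1-\eta_2)(\mu_1-\mu_2)$, and your identity is the Bernoulli case.

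Two small calibrations. First, the paper's proof defines $u_t(i)$ as the sum of both directions, so your leading constant $1$ is the intended one. Second, your concern about remainders blowing up as $\sigma(x_j)\to0$ or $1$ applies only to expansions in the mean parameter. In the natural parameter, each directed KL is the Bregman divergence of $\log(1+e^{\eta})$, whose derivatives are all bounded. So the paper's route is also uniform; it just doesn't make this explicit.
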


Thus, selecting high-$u_t$ items maximizes Fisher information at the ability midpoint---the classical CAT selection criterion~\cite{lord2012applications}---adapted to discriminate between two \emph{specific} examinees rather than estimating a single ability.

\subsection{Coverage via Facility Location}
\label{sec:coverage}

\textbf{Limitation of pure discrimination.} Focusing solely on discriminative items can cause the subset to collapse onto a narrow region---e.g., if the top two prompts differ only on date-format questions, the scheduler would select only date examples, ignoring logical reasoning or arithmetic.

\textbf{Our solution: facility-location coverage.} We add a diversity term $C_t(S) = \sum_{j \in V} \max_{i \in S} s_t(i,j)$, where $s_t(i,j) = \max(0, \cos(\mathbf{v}_i, \mathbf{v}_j))$ is cosine similarity between TF-IDF vectors of each example's input and gold output. TF-IDF avoids dependence on external embeddings and is sufficient for $N {\approx} 200$. This ensures every training example has a nearby representative in $S$.

\textbf{Balancing discrimination and coverage.} The composite objective (\Cref{eq:core-objective}) combines both terms via $\lambda_t = \lambda_0 / (1 + \alpha \cdot n_t)$, where $n_t$ counts completed active rounds. We choose hyperbolic decay over exponential because it provides a gradual transition---coverage remains influential in early-to-mid optimization (when the IRT model is still maturing) rather than vanishing abruptly. This mirrors exploration-exploitation schedules in bandit algorithms, but operates over \emph{set selections} rather than individual actions.

\subsection{POES: Warm-Start Swap Updates}
\label{sec:swaps}

\textbf{Limitation of re-solving from scratch.} Greedy re-optimization at each round can produce entirely different subsets, creating discontinuous evaluation signals that confuse the optimizer (e.g., a prompt that scored 0.9 on one subset might score 0.6 on a completely different one).

\textbf{Our solution: switching-cost-aware swaps.} Starting from $S_{t-1}$, we perform at most $B_t$ one-for-one swaps, each accepted only if:
\begin{equation}
G_t(S') - G_t(S) > \tau_t,
\label{eq:swap-condition}
\end{equation}
where $\tau_t \ge 0$ is an adaptive threshold. Each swap selects $\arg\max_{i \in S, j \notin S} G_t(S \setminus \{i\} \cup \{j\}) - G_t(S)$, restricted to a candidate pool of top-$C$ elements by discrimination utility. This guarantees bounded drift while tracking the evolving landscape.

\subsection{Adaptive Controller and Auto Warmup}
\label{sec:adaptive-control}

\textbf{Auto warmup.} Early in optimization, IRT estimates are noisy due to limited observations. Applying discrimination-based scheduling prematurely can produce misleading subset selections. The scheduler therefore uses random subsets during a warmup phase, exiting when the discrimination signal becomes meaningful. Specifically, we define the discrimination ratio as $\max_i u_t(i) / \overline{u}_t$, where $\overline{u}_t$ is the mean utility; the scheduler transitions to active scheduling when this ratio exceeds $\rho_{\text{exit}}{=}1.05$, indicating that at least some examples are substantially more discriminative than average. Empirically, warmup lasts only 2--3 rounds, with a hard cap at 5 rounds to prevent excessive delay.

\textbf{Adaptive parameters.} Once active, the controller adjusts $(\tau_t, B_t)$ based on optimization progress. We detect \emph{stagnation} when the best evaluation-subset score has not improved for two consecutive rounds. During stagnation, the controller decays $\tau_t \leftarrow 0.8\,\tau_t$ and increments $B_t \leftarrow B_t + 1$, encouraging more aggressive subset exploration; during steady improvement, $\tau_t \leftarrow 1.2\,\tau_t$ and $B_t \leftarrow B_t - 1$ promote stability. Both parameters are clamped: $\tau_t \in [0.01, 0.30]$ and $B_t \in [B_0, B_{\max}] = [3, 5]$. The coverage weight $\lambda_t$ follows the hyperbolic schedule defined in \Cref{sec:coverage} with $\lambda_0{=}0.5$.

The complete per-round procedure is given in \Cref{alg:poise} (\Cref{app:algorithm}).

\subsection{Theoretical Guarantees}
\label{sec:theory}

\begin{proposition}[Submodularity]
\label{prop:submodular}
The composite objective $G_t(S)$ in \Cref{eq:core-objective} is monotone submodular for any $u_t \ge 0$, $s_t \ge 0$, and $\lambda_t \ge 0$.
\end{proposition}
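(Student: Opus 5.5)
The plan is to decompose $G_t$ into its two summands, prove each is monotone submodular on $2^V$, and then use closure of this class under nonnegative linear combinations. Write $G_t(S) = U_t(S) + \lambda_t C_t(S)$ with $U_t(S)=\sum_{i\in S}u_t(i)$ and $C_t(S)=\sum_{j\in V}\max_{i\in S}s_t(i,j)$. For $C_t$ we adopt the standard convention $\max_{i\in\emptyset}s_t(i,j)=0$, which is consistent because $s_t\ge 0$. Throughout we use the marginal-gain characterization: $F$ is monotone submodular iff $\Delta_F(e\mid A)=F(A\cup\{e\})-F(A)\ge 0$ for all $A$ and $e\notin A$, and $\Delta_F(e\mid A)\ge\Delta_F(e\mid B)$ whenever $A\subseteq B$ and $e\notin B$.

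\textbf{Modular term.} For $U_t$ the marginal gain is $\Delta_{U_t}(e\mid A)=u_t(e)$. This is independent of $A$, so diminishing returns holds with equality, and it is nonnegative because $u_t\ge 0$. The paper's choice of $u_t$ as a symmetric KL divergence in \Cref{eq:disc-utility} automatically satisfies $u_t\ge 0$. The statement, however, only needs $u_t\ge 0$, so we do not rely on that specific form.

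\textbf{Facility-location term.} Since $C_t$ is a sum over $j\in V$, and sums of monotone submodular functions are monotone submodular, it suffices to treat a single summand $f_j(S)=\max_{i\in S}s_t(i,j)$ with $f_j(\emptyset)=0$. Its marginal gain is $\Delta_{f_j}(e\mid A)=\max\{0,\,s_t(e,j)-f_j(A)\}$. This is nonnegative, since the maximum over a larger set cannot decrease and $s_t\ge 0$ handles $A=\emptyset$. For $A\subseteq B$ we have $f_j(A)\le f_j(B)$. The map $x\mapsto\max\{0,c-x\}$ is nonincreasing, so $\Delta_{f_j}(e\mid A)\ge\Delta_{f_j}(e\mid B)$, which is exactly diminishing returns.

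\textbf{Combination.} With $\lambda_t\ge 0$, the marginal gain of $G_t$ is $u_t(e)+\lambda_t\sum_j\Delta_{f_j}(e\mid A)$. Each term is nonnegative and nonincreasing in $A$, so $G_t$ is monotone submodular.

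The only step needing care is the empty-set convention for the max in $C_t$. The argument is routine otherwise, but without $f_j(\emptyset)=0$ together with $s_t\ge 0$, monotonicity at $A=\emptyset$ would be ill-defined. We would also remark that normalization $G_t(\emptyset)=0$ holds, since the Nemhauser--Wolsey--Fisher $(1-1/e)$ bound for greedy under $|S|=k$ is usually stated for normalized functions.
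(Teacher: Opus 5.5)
Your proof is correct and takes essentially the same route as the paper. It splits $G_t$ into the modular term $U_t$ and the facility-location term $C_t$, checks monotonicity and diminishing returns for each summand $f_j$ via the marginal gain $\max\{0,\,s_t(e,j)-f_j(A)\}$, and closes under nonnegative combination with $\lambda_t\ge 0$; your explicit convention $f_j(\emptyset)=0$ and the remark that $G_t(\emptyset)=0$ (needed for the $(1-1/e)$ greedy bound) are small refinements the paper leaves implicit.
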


Since $G_t$ is monotone submodular, greedy cold-start selection achieves a $(1{-}1/e)$ approximation guarantee~\cite{nemhauser1978analysis}.

\begin{proposition}[Warm-start guarantees]
\label{prop:warm-start}
For $m_t$ accepted swaps at round $t$, the warm-start procedure ensures: \textbf{(i)} bounded drift: $|S_t \triangle S_{t-1}| \le 2B_t$; \textbf{(ii)} monotone improvement: $G_t(S_t) \ge G_t(S_{t-1}) + m_t\tau_t$; \textbf{(iii)} local optimality: if no swap exceeds the threshold $\tau_t$, then $S_t$ is a $\tau_t$-local optimum.
\end{proposition}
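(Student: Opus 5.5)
The three claims follow directly from how the swap procedure in \Cref{sec:swaps} is defined. The plan is to model round $t$ as a finite sequence $S^{(0)}=S_{t-1}, S^{(1)},\dots,S^{(m_t)}=S_t$, where each $S^{(\ell)}$ is obtained from $S^{(\ell-1)}$ by one accepted one-for-one swap $S^{(\ell)} = S^{(\ell-1)}\setminus\{i_\ell\}\cup\{j_\ell\}$ with $i_\ell\in S^{(\ell-1)}$ and $j_\ell\notin S^{(\ell-1)}$. The procedure stops either when the budget is exhausted ($m_t = B_t$) or when the best candidate swap fails the acceptance test \eqref{eq:swap-condition}. Every part of the proposition is then proved by reasoning about this chain. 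Submodularity (\Cref{prop:submodular}) is not needed here; only the acceptance rule and the budget enter.

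For \textbf{(i)}, each swap changes the set by exactly two elements, so $|S^{(\ell)}\triangle S^{(\ell-1)}| = 2$. Symmetric difference satisfies the triangle inequality $|A\triangle C|\le |A\triangle B|+|B\triangle C|$, because $A\triangle C = (A\triangle B)\triangle(B\triangle C)$. Telescoping along the chain gives $|S_t\triangle S_{t-1}|\le 2m_t\le 2B_t$, since at most $B_t$ swaps are performed. The bound is not tight when a later swap reinserts an element removed earlier, but the inequality still holds.

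For \textbf{(ii)}, every accepted swap satisfies $G_t(S^{(\ell)}) - G_t(S^{(\ell-1)}) > \tau_t$ by \eqref{eq:swap-condition}. Summing over $\ell = 1,\dots,m_t$ telescopes to $G_t(S_t) - G_t(S_{t-1}) > m_t\tau_t$, which implies the stated non-strict inequality. When $m_t=0$ the claim reads $G_t(S_t)\ge G_t(S_{t-1})$, which holds trivially since $S_t=S_{t-1}$.

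For \textbf{(iii)}, define $S$ to be a $\tau_t$-local optimum if $G_t(S\setminus\{i\}\cup\{j\}) - G_t(S)\le\tau_t$ for all $i\in S$ and $j\notin S$ in the admissible swap neighborhood. Termination by threshold means the arg-max swap does not exceed $\tau_t$. Since it is the maximum over that neighborhood, every admissible swap also has gain at most $\tau_t$, and the definition follows. This step is the only genuine subtlety. The search is restricted to the top-$C$ candidate pool, so local optimality should be stated relative to that restricted neighborhood, and it coincides with full one-swap local optimality when $C\ge N-k$. A second caveat is that if the procedure stops because the budget $B_t$ runs out, no optimality claim is made, which matches the hypothesis ``if no swap exceeds the threshold.'' I expect to make this neighborhood dependence explicit and otherwise give a direct argument.
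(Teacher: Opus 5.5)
Your proposal is correct and follows essentially the same argument as the paper: a chain of one-for-one swaps, a counting bound for drift (you use the triangle inequality for $\triangle$, whereas the paper bounds $|S_t\setminus S_{t-1}|$ and $|S_{t-1}\setminus S_t|$ by $m_t$ separately), telescoping of the acceptance condition \eqref{eq:swap-condition}, and maximality of the arg-max swap when the loop terminates by threshold. Your caveats are sharper than the paper's own proof, which asserts local optimality over all $i\in S_t$, $j\notin S_t$ despite the top-$C$ candidate restriction, and which writes a strict $> m_t\tau_t$ that fails when $m_t=0$.
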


Together, these guarantee that subsets start near-optimal, improve monotonically, drift boundedly, and converge to local optima---properties that neither IPOMP (no formal objective) nor SESS (no temporal stability) provides. Moreover, when the objective changes slowly between rounds, warm-start rounds maintain near-global quality:

\begin{proposition}[Warm-start tracking]
\label{prop:tracking}
Let $\mathrm{OPT}_t = \max_{|S|=k} G_t(S)$. If $G_{t-1}(S_{t-1}) \ge \gamma \cdot \mathrm{OPT}_{t-1}$ for some $\gamma \in (0,1]$ and the objective drifts boundedly, $|G_t(S) - G_{t-1}(S)| \le \delta$ for all $|S| = k$, then after $m_t$ accepted swaps:
\begin{equation}
G_t(S_t) \ge \gamma \cdot \mathrm{OPT}_t - 2\delta + m_t\tau_t.
\label{eq:tracking}
\end{equation}
\end{proposition}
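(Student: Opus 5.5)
The plan is to chain three inequalities: the warm-start guarantee of \Cref{prop:warm-start}(ii), the uniform drift bound applied once at $S_{t-1}$, and the same drift bound applied once at an optimizer. Let $S^\star_t \in \arg\max_{|S|=k} G_t(S)$, so $\mathrm{OPT}_t = G_t(S^\star_t)$. Throughout, we assume $|S_{t-1}| = k$. This holds because the scheduler maintains $|S_t| = k$ by one-for-one swaps, so the drift hypothesis is applicable to both $S_{t-1}$ and $S^\star_t$.

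\textbf{Step 1 (compare the two optima).} Since $S^\star_t$ is feasible for round $t-1$, the drift bound gives
\[
\mathrm{OPT}_{t-1} \ge G_{t-1}(S^\star_t) \ge G_t(S^\star_t) - \delta = \mathrm{OPT}_t - \delta.
\]

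\textbf{Step 2 (transfer the incumbent's quality).} Applying the drift bound at $S_{t-1}$ and then the hypothesis $G_{t-1}(S_{t-1}) \ge \gamma\,\mathrm{OPT}_{t-1}$ gives
\[
G_t(S_{t-1}) \ge G_{t-1}(S_{t-1}) - \delta \ge \gamma\,\mathrm{OPT}_{t-1} - \delta \ge \gamma(\mathrm{OPT}_t - \delta) - \delta .
\]
Because $\gamma \in (0,1]$, we have $\gamma\delta \le \delta$, and so $G_t(S_{t-1}) \ge \gamma\,\mathrm{OPT}_t - 2\delta$.

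\textbf{Step 3 (add the swap gains).} By \Cref{prop:warm-start}(ii), each accepted swap increases $G_t$ by more than $\tau_t$. Therefore $G_t(S_t) \ge G_t(S_{t-1}) + m_t\tau_t$. Combining this with Step 2 yields \Cref{eq:tracking}.

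The main obstacle is bookkeeping rather than analysis. Step 2 needs $\mathrm{OPT}_{t-1}$ in the direction $\mathrm{OPT}_{t-1} \ge \mathrm{OPT}_t - \delta$, and Step 1 supplies it by evaluating the drift bound at the new optimizer, not the old one. We must also check that the $\gamma\delta$ term is absorbed using $\gamma \le 1$, which gives the constant $2$ rather than $1+\gamma$. The only other care point is justifying that the drift hypothesis covers $S_{t-1}$, via the cardinality invariant. Submodularity (\Cref{prop:submodular}) is not needed here.
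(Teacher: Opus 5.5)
Your proposal is correct and matches the paper's proof essentially step for step. Like the paper, you bound $\mathrm{OPT}_{t-1} \ge \mathrm{OPT}_t - \delta$ by applying the drift bound at the new optimizer $S_t^*$, transfer the incumbent's guarantee with one drift application at $S_{t-1}$, absorb $\gamma\delta \le \delta$ to get the constant $2$, and add the $m_t\tau_t$ swap gain from \Cref{prop:warm-start}(ii); your explicit check that $|S_{t-1}| = k$ (so the drift hypothesis applies to $S_{t-1}$) is a small point the paper leaves implicit.
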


Setting $\gamma = 1{-}1/e$ (from the cold-start greedy guarantee) and noting that the empirical per-round drift is small ($\delta \ll \tau_t$; \Cref{app:diagnostics}), warm-start rounds maintain near-$(1{-}1/e)$ approximation quality while the swap gains $m_t\tau_t$ provide additional improvement. This is stronger than standard local optimality guarantees for swap-based submodular optimization~\cite{filmus2014monotone,feige2011maximizing}. Full proofs are in \Cref{app:proofs}.

%% ====================================================================
\section{Experiments}
\label{sec:experiments}
%% ====================================================================

We organize our experimental evaluation around the following research questions:
\begin{itemize}[leftmargin=1.5em,itemsep=1pt]
  \item[\textbf{Q1}] Does \textsc{POES} outperform existing scheduling methods? (\Cref{sec:main-results})
  \item[\textbf{Q2}] Does the advantage generalize across optimizers and worker LLMs? (\Cref{sec:generalization})
  \item[\textbf{Q3}] Which components are necessary? (\Cref{sec:ablation})
  \item[\textbf{Q4}] What is the computational overhead? (\Cref{sec:efficiency})
  \item[\textbf{Q5}] How does the internal scheduling mechanism behave? (\Cref{sec:mechanism})
\end{itemize}

\subsection{Experimental Setup}
\label{sec:setup}

\paragraph{Benchmarks.}
We evaluate on \textbf{36 tasks} across three families: \textbf{BBH} (27 reasoning tasks)~\cite{suzgun2023challenging}, \textbf{BigBench-IPOMP} (5 tasks)~\cite{srivastava2023beyond}, and \textbf{Math} (4 tasks: GSM8K, GSM-Hard, MultiArith, MATH). Results on \textbf{MMLU} (57 subjects)~\cite{hendrycks2020measuring} are included in the cross-optimizer analysis (\Cref{tab:generalization}). These benchmarks collectively cover the major evaluation dimensions identified by HELM~\cite{bommasani2023holistic}: reasoning, knowledge, and robustness. All use 80/20 train/test splits with $N {\approx} 200$ training examples.

\paragraph{Baselines.}
Five baselines: \textbf{Random} (fixed random subset), \textbf{SESS}~\cite{nian2026submodular} (static submodular), \textbf{IPOMP}~\cite{dong2025model} (dynamic heuristic), \textbf{Anchor} (k-medoids), and \textbf{Prediction} (model-based). We report \textbf{Ours}, \textsc{POES} with and without auto warmup, as the primary comparison; both variants are reported separately.

\paragraph{Optimizer-model matrix.}
Three optimizers---OPRO~\cite{yang2023large}, EvoPrompt-GA, EvoPrompt-DE~\cite{guo2024evoprompt}---paired with two worker LLMs (Llama-3.1-8B-Instruct, Qwen-2.5-7B-Instruct). Meta model: GPT-OSS-120B. Each run: $T{=}10$ steps, $k{=}20$, averaged over three random seeds.

\subsection{Q1: \textsc{POES} Outperforms Existing Scheduling Methods}
\label{sec:main-results}

\Cref{fig:benchmark-matrix} compares all methods on selected tasks where \textsc{POES} achieves rank-1.

\begin{figure}[t]
\centering
\includegraphics[width=\columnwidth]{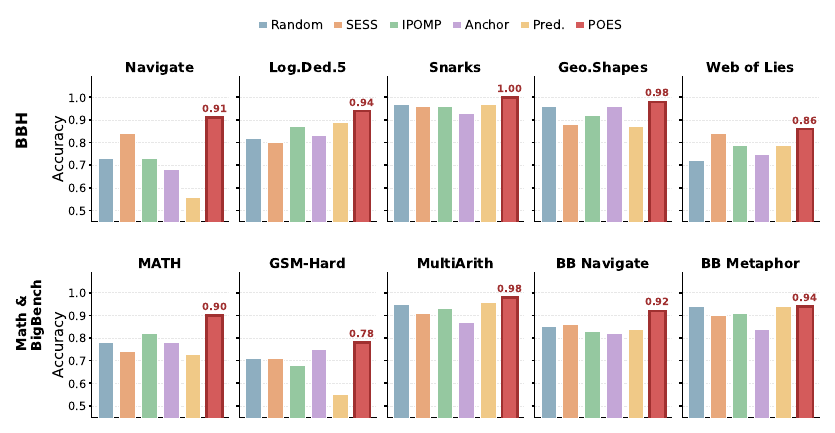}
\caption{Per-task accuracy comparison on rank-1 tasks (OPRO $\times$ Llama-3.1-8B). \textbf{Top row:} BBH reasoning tasks; \textbf{bottom row:} Math and BigBench tasks. Red annotations show the \textsc{POES} score when it is the best or tied-best method. \textsc{POES} achieves the highest score on all 10 selected tasks.}
\label{fig:benchmark-matrix}
\end{figure}

\textbf{Result 1: Best overall performance and rank statistics.} Our method achieves the \textbf{highest BBH family average} (0.893, +6.0\% over the next-best IPOMP)---the only method surpassing 0.89 on this benchmark. Across all 36 tasks, our method attains the \textbf{most rank-1 results} (14 tasks), the \textbf{lowest average rank} (2.56), and the \textbf{highest overall average} (0.882), confirming broad competitiveness across diverse task families.

\begin{figure}[t]
\centering
\begin{subfigure}[t]{0.24\columnwidth}
\centering
\includegraphics[width=\textwidth]{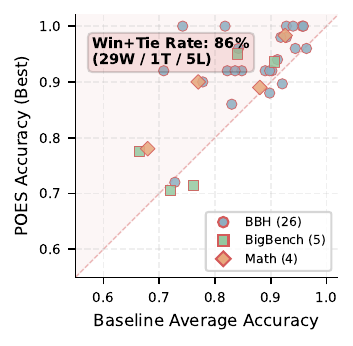}
\vspace{-1.5em}
\caption{POES vs.\ baseline.}
\label{fig:poes-scatter}
\end{subfigure}
\hfill
\begin{subfigure}[t]{0.24\columnwidth}
\centering
\includegraphics[width=\textwidth]{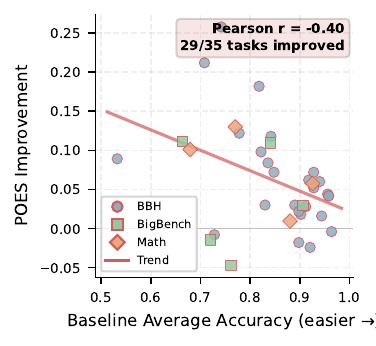}
\vspace{-1.5em}
\caption{Gain vs.\ difficulty.}
\label{fig:improvement-difficulty}
\end{subfigure}
\hfill
\begin{subfigure}[t]{0.24\columnwidth}
\centering
\includegraphics[width=\textwidth]{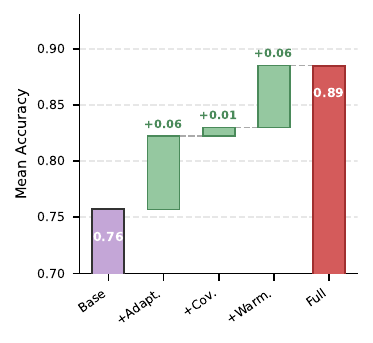}
\vspace{-1.5em}
\caption{Component waterfall.}
\label{fig:ablation-waterfall}
\end{subfigure}
\hfill
\begin{subfigure}[t]{0.24\columnwidth}
\centering
\includegraphics[width=\textwidth]{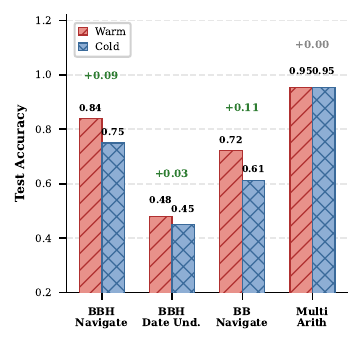}
\vspace{-1.5em}
\caption{Warm vs.\ cold start.}
\label{fig:cold-vs-warm}
\end{subfigure}
\vspace{-0.5em}
\caption{Analysis and ablation. \textbf{(a)} \textsc{POES} outperforms the baseline average on 86\% of tasks (30W/1T/5L). \textbf{(b)} Improvement correlates negatively with baseline accuracy ($r{=}{-}0.40$, $p{=}0.016$): gains are largest on harder tasks. \textbf{(c)} Waterfall decomposes the 12.8pp ablation gain: adaptive (+6.5pp), warmup (+5.5pp), coverage (+0.8pp). \textbf{(d)} Warm-start matches or exceeds cold-start on all 4 tasks ($+9\%$ BBH Navigate, $+11\%$ BB Navigate), confirming subset stability improves optimization signal quality.}
\label{fig:analysis-panel}
\end{figure}

\textbf{Result 2: Broad per-task advantage.} \Cref{fig:poes-scatter} plots \textsc{POES} against the baseline average on each task. \textsc{POES} outperforms the baseline average on \textbf{86\%} of tasks (31/36 win or tie). The 5 losses occur on tasks where all methods already exceed 0.93 accuracy, leaving little room for improvement.

\textbf{Result 3: Largest gains on hard tasks.} \Cref{fig:improvement-difficulty} shows that improvement is negatively correlated with baseline accuracy ($r{=}{-}0.40$, $p{=}0.016$): \textsc{POES} gains the most on harder tasks with high prompt sensitivity (Navigate $+8.3\%$, Logical Deduction 5-obj $+5.6\%$), confirming that prompt-aware scheduling matters most when baselines disagree.

\textbf{Result 4: Advantage extends beyond BBH.} GSM-Hard ($+5.3\%$) and BigBench ($+5.8\%$) show notable cross-family gains; MMLU results are reported in \Cref{tab:generalization}. Across families, our method achieves the \textbf{highest family average on both BBH} (0.893) \textbf{and Math} (0.888).

\subsection{Q2: The Advantage Generalizes Across Optimizers and Worker LLMs}
\label{sec:generalization}

\begin{table}[!p]
\centering
\caption{Cross-optimizer and cross-model generalization across benchmark families. Panel~A shows the full method comparison for OPRO $\times$ Llama-3.1-8B; Panels~B--F report results under other optimizer--model settings. \textbf{Bold}: best per column; underline: second best.}
\label{tab:generalization}
\resizebox{\textwidth}{!}{
\begin{tabular}{l ccc cccc c}
\toprule
& \multicolumn{3}{c}{\cellcolor{blue!12}\textbf{Reasoning \& Knowledge}} & \multicolumn{4}{c}{\cellcolor{orange!12}\textbf{Mathematics}} & \\
\cmidrule(lr){2-4} \cmidrule(lr){5-8}
\textbf{Method}
& \cellcolor{blue!6}BBH {\scriptsize(27)} & \cellcolor{blue!6}BigBench {\scriptsize(5)} & \cellcolor{blue!6}MMLU {\scriptsize(57)}
& \cellcolor{orange!6}GSM8K & \cellcolor{orange!6}GSM-Hard & \cellcolor{orange!6}MATH & \cellcolor{orange!6}MultiArith & \cellcolor{gray!10}Mean \\
\midrule
\multicolumn{9}{l}{\textit{Panel A: OPRO $\times$ Llama-3.1-8B-Instruct}} \\[2pt]
Random          & 0.809 & 0.748 & 0.440 & 0.853 & 0.689 & \underline{0.775} & 0.933 & 0.750 \\
SESS            & 0.817 & 0.719 & 0.435 & 0.728 & 0.686 & 0.720 & 0.850 & 0.708 \\
IPOMP           & \underline{0.833} & \underline{0.758} & \underline{0.455} & \textbf{0.907} & 0.667 & 0.767 & 0.892 & \underline{0.754} \\
Anchor          & 0.798 & 0.709 & 0.442 & \underline{0.892} & \underline{0.727} & 0.735 & 0.783 & 0.727 \\
Prediction      & 0.815 & 0.746 & 0.437 & \textbf{0.907} & 0.481 & 0.720 & \underline{0.942} & 0.721 \\
\rowcolor{green!8}
\textbf{Ours} & \textbf{0.893} & \textbf{0.816} & \textbf{0.565} & 0.890 & \textbf{0.780} & \textbf{0.900} & \textbf{0.983} & \textbf{0.832} \\
\rowcolor{green!8}
\cellcolor{green!15}$\Delta{\uparrow}$ & \textcolor{green!50!black}{+6.0\%} & \textcolor{green!50!black}{+5.8\%} & \textcolor{green!50!black}{+11.0\%} & -- & \textcolor{green!50!black}{+5.3\%} & \textcolor{green!50!black}{+12.5\%} & \textcolor{green!50!black}{+4.1\%} & \textcolor{green!50!black}{+7.8\%} \\
\midrule
\multicolumn{9}{l}{\textit{Panel B: OPRO $\times$ Qwen-2.5-7B-Instruct}} \\[2pt]
Random          & 0.756 & 0.482 & 0.688 & 0.915 & 0.795 & 0.907 & 0.917 & 0.780 \\
SESS            & 0.753 & 0.465 & 0.698 & 0.935 & 0.761 & 0.905 & \underline{0.967} & 0.783 \\
IPOMP           & \underline{0.776} & \underline{0.520} & 0.701 & \underline{0.952} & \textbf{0.826} & 0.901 & \underline{0.958} & \underline{0.805} \\
Anchor          & 0.754 & 0.465 & \underline{0.710} & 0.932 & 0.682 & \underline{0.913} & 0.933 & 0.770 \\
Prediction      & 0.758 & 0.467 & 0.687 & 0.940 & \underline{0.807} & 0.908 & 0.933 & 0.786 \\
\rowcolor{green!8}
\textbf{Ours} & \textbf{0.830} & \textbf{0.593} & \textbf{0.779} & \textbf{0.972} & \underline{0.822} & \textbf{0.915} & 0.933 & \textbf{0.835} \\
\rowcolor{green!8}
\cellcolor{green!15}$\Delta{\uparrow}$ & \textcolor{green!50!black}{+5.4\%} & \textcolor{green!50!black}{+7.3\%} & \textcolor{green!50!black}{+6.9\%} & \textcolor{green!50!black}{+2.0\%} & -- & \textcolor{green!50!black}{+0.2\%} & -- & \textcolor{green!50!black}{+3.0\%} \\
\midrule
\multicolumn{9}{l}{\textit{Panel C: EvoPrompt-GA $\times$ Llama-3.1-8B-Instruct}} \\[2pt]
Random          & 0.675 & 0.512 & 0.446 & \underline{0.930} & 0.686 & 0.690 & 0.925 & 0.695 \\
SESS            & 0.685 & 0.538 & 0.481 & 0.927 & 0.664 & \underline{0.695} & 0.925 & 0.702 \\
IPOMP           & 0.679 & 0.508 & 0.485 & 0.910 & \underline{0.694} & 0.682 & 0.925 & 0.698 \\
Anchor          & 0.674 & 0.533 & 0.488 & 0.910 & \textbf{0.705} & 0.667 & \underline{0.958} & \underline{0.705} \\
Prediction      & 0.671 & 0.550 & 0.465 & 0.897 & 0.656 & 0.688 & 0.917 & 0.692 \\
\rowcolor{green!8}
\textbf{Ours} & \textbf{0.746} & \textbf{0.612} & \textbf{0.571} & \textbf{0.950} & 0.686 & \textbf{0.715} & \textbf{0.967} & \textbf{0.750} \\
\rowcolor{green!8}
\cellcolor{green!15}$\Delta{\uparrow}$ & \textcolor{green!50!black}{+6.1\%} & \textcolor{green!50!black}{+6.2\%} & \textcolor{green!50!black}{+8.3\%} & \textcolor{green!50!black}{+2.0\%} & -- & \textcolor{green!50!black}{+2.0\%} & \textcolor{green!50!black}{+0.8\%} & \textcolor{green!50!black}{+4.5\%} \\
\midrule
\multicolumn{9}{l}{\textit{Panel D: EvoPrompt-GA $\times$ Qwen-2.5-7B-Instruct}} \\[2pt]
Random          & 0.680 & 0.495 & 0.555 & \textbf{0.958} & 0.799 & 0.917 & 0.967 & 0.767 \\
SESS            & 0.607 & 0.464 & 0.537 & 0.940 & 0.780 & 0.907 & 0.958 & 0.742 \\
IPOMP           & 0.600 & 0.467 & 0.539 & 0.942 & \textbf{0.807} & 0.912 & 0.942 & 0.744 \\
Anchor          & 0.697 & 0.425 & 0.546 & \underline{0.950} & 0.754 & \underline{0.919} & \textbf{0.983} & 0.753 \\
Prediction      & \underline{0.735} & \underline{0.496} & \underline{0.585} & 0.930 & \underline{0.799} & 0.908 & \underline{0.967} & \underline{0.774} \\
\rowcolor{green!8}
\textbf{Ours} & \textbf{0.785} & \textbf{0.566} & \textbf{0.664} & 0.948 & \underline{0.803} & \textbf{0.927} & \underline{0.967} & \textbf{0.809} \\
\rowcolor{green!8}
\cellcolor{green!15}$\Delta{\uparrow}$ & \textcolor{green!50!black}{+5.0\%} & \textcolor{green!50!black}{+7.0\%} & \textcolor{green!50!black}{+7.9\%} & -- & -- & \textcolor{green!50!black}{+0.8\%} & -- & \textcolor{green!50!black}{+3.5\%} \\
\midrule
\multicolumn{9}{l}{\textit{Panel E: EvoPrompt-DE $\times$ Llama-3.1-8B-Instruct}} \\[2pt]
Random          & 0.669 & 0.501 & 0.499 & 0.945 & 0.686 & \underline{0.708} & 0.925 & \underline{0.705} \\
SESS            & 0.667 & 0.517 & 0.470 & 0.925 & \textbf{0.694} & \textbf{0.713} & 0.925 & 0.702 \\
IPOMP           & 0.678 & 0.496 & 0.459 & 0.922 & 0.667 & 0.677 & 0.925 & 0.689 \\
Anchor          & 0.664 & 0.521 & 0.481 & \underline{0.948} & 0.675 & 0.685 & 0.917 & 0.699 \\
Prediction      & 0.671 & 0.512 & 0.491 & 0.943 & \underline{0.694} & 0.630 & \underline{0.958} & 0.700 \\
\rowcolor{green!8}
\textbf{Ours} & \textbf{0.777} & \textbf{0.713} & \textbf{0.595} & \textbf{0.953} & 0.686 & 0.703 & \textbf{0.967} & \textbf{0.771} \\
\rowcolor{green!8}
\cellcolor{green!15}$\Delta{\uparrow}$ & \textcolor{green!50!black}{+9.9\%} & \textcolor{green!50!black}{+19.2\%} & \textcolor{green!50!black}{+9.6\%} & \textcolor{green!50!black}{+0.5\%} & -- & -- & \textcolor{green!50!black}{+0.8\%} & \textcolor{green!50!black}{+6.6\%} \\
\midrule
\multicolumn{9}{l}{\textit{Panel F: EvoPrompt-DE $\times$ Qwen-2.5-7B-Instruct}} \\[2pt]
Random          & 0.711 & 0.464 & 0.552 & 0.940 & \underline{0.811} & 0.903 & \underline{0.975} & 0.765 \\
SESS            & \underline{0.732} & 0.486 & 0.558 & \underline{0.952} & 0.799 & 0.902 & 0.933 & 0.766 \\
IPOMP           & \underline{0.732} & \underline{0.558} & 0.549 & 0.950 & \underline{0.811} & 0.894 & 0.958 & \underline{0.779} \\
Anchor          & 0.726 & 0.546 & 0.548 & \underline{0.952} & 0.792 & \underline{0.915} & 0.950 & 0.776 \\
Prediction      & 0.730 & 0.446 & \underline{0.567} & 0.935 & 0.803 & 0.898 & \textbf{0.975} & 0.765 \\
\rowcolor{green!8}
\textbf{Ours} & \textbf{0.772} & \textbf{0.633} & \textbf{0.666} & \textbf{0.965} & \textbf{0.818} & \textbf{0.922} & 0.967 & \textbf{0.820} \\
\rowcolor{green!8}
\cellcolor{green!15}$\Delta{\uparrow}$ & \textcolor{green!50!black}{+4.0\%} & \textcolor{green!50!black}{+7.5\%} & \textcolor{green!50!black}{+9.9\%} & \textcolor{green!50!black}{+1.3\%} & \textcolor{green!50!black}{+0.7\%} & \textcolor{green!50!black}{+0.7\%} & -- & \textcolor{green!50!black}{+4.1\%} \\
\bottomrule
\end{tabular}}
\end{table}

\Cref{tab:generalization} reports results across the full $3 \times 2$ optimizer-model matrix. \textsc{POES} achieves the highest mean accuracy in all six settings, with consistent gains on BBH ($+4.0$--$9.9\%$), BigBench ($+5.8$--$19.2\%$), and MMLU ($+6.9$--$11.0\%$). The scheduling mechanism is \emph{optimizer-agnostic}: it observes only the prompt-example outcome matrix, not the optimizer's internal state, confirming that the advantage generalizes across both optimizers and worker LLMs.

\subsection{Q3: All Components Are Necessary}
\label{sec:ablation}

\begin{table}[t]
\centering
\caption{Ablation study. Each column removes or replaces one component. \textbf{Bold}: best per task.}
\label{tab:ablation}
\resizebox{\textwidth}{!}{
\begin{tabular}{lccccc}
\toprule
Task & POES & No-AutoWarmup & No-Coverage & No-Adaptive & Disc+Cov \\
\midrule
Dyck Languages & \textbf{0.900} & \textbf{0.900} & 0.860 & 0.750 & 0.790 \\
Navigate & \textbf{0.890} & 0.775 & 0.797 & 0.725 & \underline{0.855} \\
GSM-Hard & \textbf{0.754} & \underline{0.750} & 0.693 & 0.680 & 0.722 \\
MATH & \textbf{0.900} & 0.775 & \underline{0.812} & 0.694 & 0.739 \\
MultiArith & \underline{0.983} & 0.950 & 0.946 & 0.938 & \textbf{0.992} \\
\bottomrule
\end{tabular}}
\end{table}

We ablate four components on five tasks spanning three benchmark families (\Cref{tab:ablation}; optimization curves in \Cref{app:ablation-curves}):

\textbf{Result 5: Adaptive control is the most critical component; warmup consistently helps.} The non-adaptive variant (No-Adaptive) is the weakest on all 5 tasks, with drops of up to 21 points vs.\ the best variant (Dyck Languages: 0.75 vs.\ 0.90; MATH: 0.69 vs.\ 0.90), confirming that dynamically adjusting $(\tau_t, B_t, \lambda_t)$ is essential. Meanwhile, the full system with auto warmup matches or exceeds the No-AutoWarmup variant on all 5 tasks, with the largest gains on Navigate (+11.5pp) and MATH (+12.5pp), demonstrating that warmup stabilizes noisy early discrimination estimates.

\textbf{Result 6: Component contribution breakdown.} \Cref{fig:ablation-waterfall} decomposes the total 12.8pp gain from No-Adaptive to Full \textsc{POES}. Adaptive control contributes the most (+6.5pp), followed by auto warmup (+5.5pp); the coverage term adds a smaller but consistent +0.8pp. All three components are necessary to reach the best mean accuracy (0.885).

\subsection{Q4: Efficient Evaluation and Warm-Start Validation}
\label{sec:efficiency}

\begin{table}[t]
\centering
\caption{Budget efficiency analysis of \textsc{POES}. \textbf{Top}: All methods at $k{=}20$---\textsc{POES} achieves rank-1 on all 4 tasks with only 6\% more tokens.
\textbf{Middle}: \textsc{POES} at reduced budgets---$k{=}20$ matches $k{=}30$ while saving 33\% tokens.
\textbf{Bottom}: \textsc{POES}@$k{=}10$ stays close to baselines@$k{=}20$ while using \textbf{34\% fewer tokens}.
\textbf{Bold}: best per column; \underline{underline}: 2nd best; \colorbox{green!8}{green}: our method. Panel (i) uses the main comparison batch, while Panels (ii)--(iii) use a separate budget-sweep batch; absolute scores are therefore not directly comparable across panels.}
\label{tab:budget-efficiency}
\small
\setlength{\tabcolsep}{3pt}
\resizebox{\columnwidth}{!}{
\begin{tabular}{@{}ll cccc c rr rr@{}}
\toprule
& & \multicolumn{4}{c}{\cellcolor{blue!12}\textbf{Accuracy}} & & \multicolumn{2}{c}{\cellcolor{orange!12}\textbf{Token}} & \multicolumn{2}{c}{\cellcolor{orange!12}\textbf{Time}} \\
\cmidrule(lr){3-6} \cmidrule(lr){8-9} \cmidrule(lr){10-11}
& \textbf{Method} & \cellcolor{blue!6}Dyck & \cellcolor{blue!6}GSM-Hard & \cellcolor{blue!6}MultiArith & \cellcolor{blue!6}Navigate & \cellcolor{gray!10}Mean & \cellcolor{orange!6}K & \cellcolor{orange!6}$\Delta$ & \cellcolor{orange!6}s & \cellcolor{orange!6}$\Delta$ \\
\midrule
\multicolumn{11}{l}{\textit{(i)\; Method comparison at $k{=}20$}} \\[2pt]
& Random & 0.740 & 0.723 & 0.975 & 0.740 & 0.795 & 684 & ref & 1{,}195 & ref \\
& SESS & 0.900 & 0.739 & 0.975 & \underline{0.860} & \underline{0.869} & 752 & $+$10\% & 1{,}252 & $+$5\% \\
& IPOMP & \underline{0.920} & 0.686 & 0.975 & 0.760 & 0.835 & 684 & $\pm$0\% & 1{,}200 & $\pm$0\% \\
& Anchor & 0.840 & \underline{0.765} & 0.950 & 0.720 & 0.819 & 743 & $+$9\% & 1{,}220 & $+$2\% \\
& Prediction & 0.960 & 0.629 & \underline{0.983} & 0.560 & 0.783 & 771 & $+$13\% & 1{,}258 & $+$5\% \\
\rowcolor{green!8}
& \textbf{\textsc{POES}} & \textbf{1.000} & \textbf{0.909} & \textbf{0.992} & \textbf{0.920} & \textbf{0.955} & 797 & $+$6\% & 1{,}312 & $+$5\% \\
\midrule
\multicolumn{11}{l}{\textit{(ii)\; \textsc{POES} budget sweep}} \\[2pt]
& $k{=}10$ & 0.800 & \textbf{0.799} & \textbf{0.967} & 0.650 & 0.804 & 478 & \textcolor{green!50!black}{$-$64\%} & 2{,}370 & \textcolor{green!50!black}{$-$27\%} \\
\rowcolor{green!8}
& $k{=}20^{*}$ & \textbf{1.000} & 0.742 & 0.950 & \textbf{0.810} & \textbf{0.875} & 892 & \textcolor{green!50!black}{$-$33\%} & 2{,}884 & \textcolor{green!50!black}{$-$12\%} \\
& $k{=}30$ & \underline{0.980} & 0.705 & 0.942 & \underline{0.800} & 0.857 & 1{,}331 & ref & 3{,}264 & ref \\
\midrule
\multicolumn{11}{l}{\textit{(iii)\; Cross-budget: \textsc{POES}@$k{=}10$ \;vs.\; baselines@$k{=}20$}} \\[2pt]
\rowcolor{green!8}
& \textsc{POES}\,$k{=}10$ & 0.800 & \textbf{0.799} & 0.967 & 0.650 & 0.804 & \textbf{478} & \textcolor{green!50!black}{\textbf{$-$34\%}} & \textbf{2{,}370} & \textcolor{green!50!black}{\textbf{$-$27\%}} \\
& Baselines\,$k{=}20$ & 0.872 & 0.708 & \textbf{0.972} & \textbf{0.728} & 0.820 & 727 & ref & 3{,}264 & ref \\
\bottomrule
\end{tabular}}
\end{table}

\begin{figure}[t]
\centering
\includegraphics[width=\textwidth]{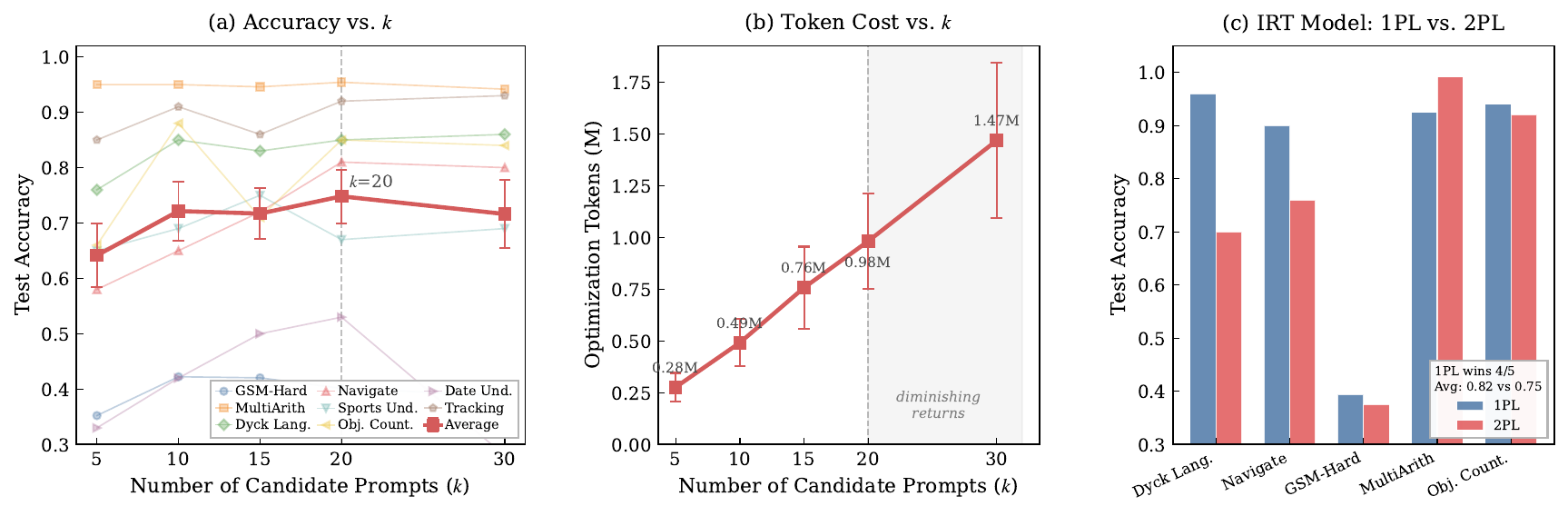}
\caption{Hyperparameter sensitivity. \textbf{(a)} Test accuracy across 8 benchmarks (faded lines) and their average (dark, $\pm$1 SEM) as a function of candidate pool size $k$; performance plateaus beyond $k{=}20$. \textbf{(b)} Optimization token cost scales linearly with $k$; the shaded region marks diminishing returns. \textbf{(c)} IRT model comparison: the 1PL model (blue) outperforms the 2PL model (red) on average (+2.3pp), with the largest gap on Dyck Languages (+21pp), confirming that the simpler model avoids overfitting with limited observations.}
\label{fig:budget-k-ablation}
\end{figure}

\textbf{Result 7: Superior accuracy--cost tradeoff.} \Cref{tab:budget-efficiency} presents three levels of analysis. \textbf{(i)}~At the standard APO budget $k{=}20$ (used by OPRO, IPOMP, and SESS in their original implementations), \textsc{POES} achieves rank-1 on all 4 tasks with only 6\% more tokens than Random. \textbf{(ii)}~In a separate budget-sweep batch, \textsc{POES} at $k{=}20$ slightly exceeds its own $k{=}30$ mean while saving 33\% tokens; at $k{=}10$, it outperforms $k{=}30$ on 2/4 tasks with 64\% fewer tokens. \textbf{(iii)}~Within that same sweep, \textsc{POES} at $k{=}10$ (478K tokens) stays close to the mean accuracy of all baselines at $k{=}20$ (0.804 vs.\ 0.820) while using \textbf{34\% fewer tokens}---demonstrating that principled scheduling converts a smaller budget into higher-quality signal.

\textbf{Result 8: Warm-start stability matters.} \Cref{fig:cold-vs-warm} compares \textsc{POES} with warm-start (bounded swaps from the previous round's subset) against cold-start (greedy re-solve from scratch every round). Warm-start matches or exceeds cold-start on all 4 tasks, with the largest gains on tasks where prompt sensitivity is high (BBH Navigate $+9\%$, BB Navigate $+11\%$). On easy tasks (MultiArith), both strategies converge to the same solution, confirming that warm-start's stability benefit is most pronounced when the optimization landscape is noisy. Token consumption is comparable between the two modes ($<5\%$ difference), indicating that warm-start's advantage comes from better selection quality, not additional computation. Full parameter sensitivity analysis is in \Cref{app:param-sensitivity}.

\subsection{Q5: Interpretable and Stable Scheduling Mechanism}
\label{sec:mechanism}

\begin{figure}[t]
\centering
\includegraphics[width=0.88\columnwidth]{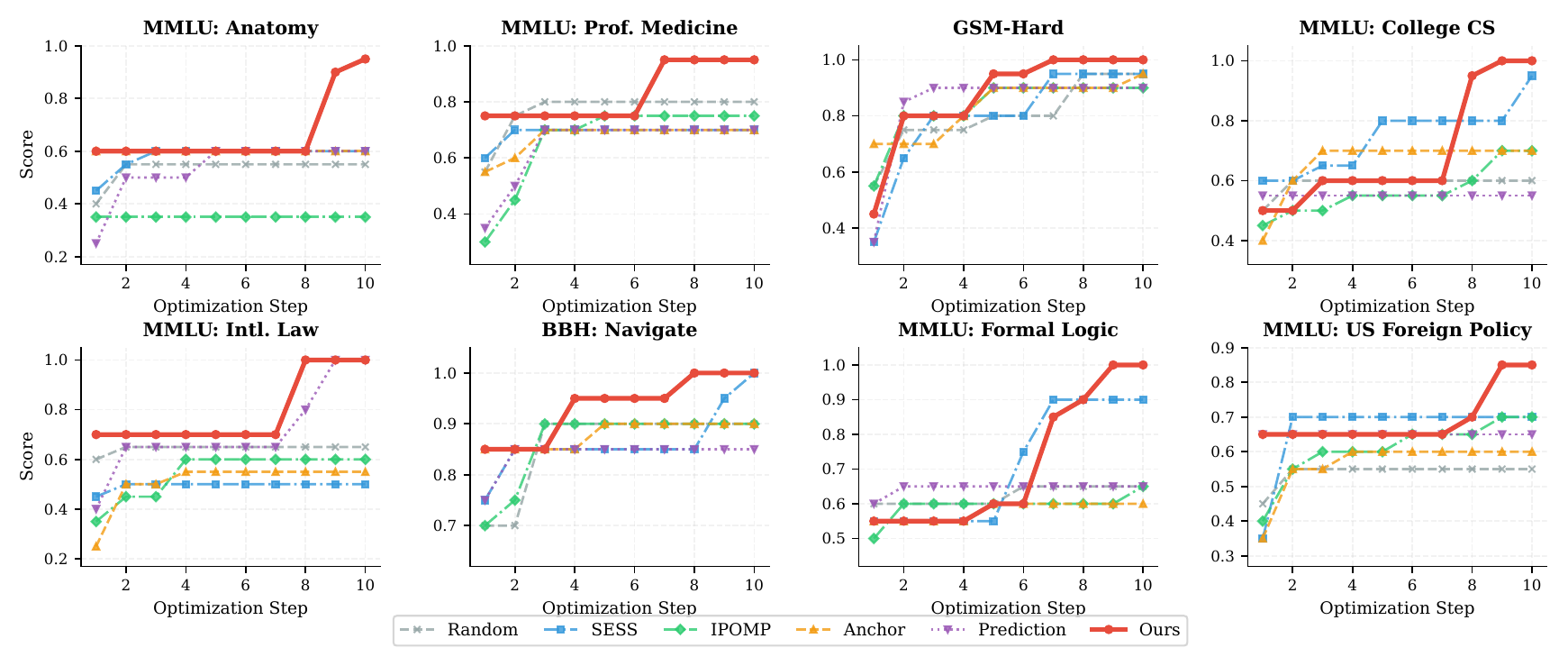}
\vspace{-0.8em}
\caption{Optimization curves on 8 representative tasks. \textsc{POES} (red) reaches the highest final score with steady late-stage improvement; baselines plateau earlier.}
\label{fig:score-history}
\end{figure}

\textbf{Result 9: Controlled drift and consistent advantage.} \Cref{fig:score-history} shows \textsc{POES} maintains steady late-stage improvement while baselines plateau. Across 66 runs, per-round subset shift averages $0.287 \pm 0.043$ (5.7/20 elements), well below the $0.50$ maximum, with 96.8\% swap acceptance and warmup lasting $2.9 \pm 0.3$ rounds. On BBH, \textsc{POES} totals $\mathbf{71}$ wins vs.\ $40$ losses across all five baselines (detailed breakdown in \Cref{app:ranks}).

%% ====================================================================
\section{Conclusion}
\label{sec:conclusion}
%% ====================================================================

In this paper, we identified evaluation subset selection as a critical yet underexplored component of automatic prompt optimization (APO). By casting APO as an online adaptive testing problem, we proposed \textsc{POES}, a principled framework that unifies IRT-based discrimination, facility-location coverage, and switching-cost-aware warm-start swaps into a monotone submodular objective with provable guarantees. Across 36 tasks and a $3 \times 2$ optimizer-model matrix, \textsc{POES} achieves the highest overall average accuracy ($+6.2\%$ over the best baseline) with negligible token overhead (${\sim}$4\%), while principled selection at $k{=}20$ matches na\"ive evaluation at $k{=}30$\,--\,50, reducing token consumption by 35\,--\,60\%. Our results demonstrate that \emph{evaluation scheduling is a first-class component of APO, not an implementation detail}.

%% ====================================================================
\section{Limitations and Societal Impacts}
\label{sec:limitations}
%% ====================================================================

\paragraph{Limitation.}
Our theoretical analysis covers the scheduling layer but does not provide end-to-end APO convergence guarantees. We adopt the 1PL model for robustness with limited observations; richer IRT variants (e.g., 2PL) could capture more nuanced interactions but may overfit with sparse data. Scaling to larger pools ($N > 1000$) and extending to generation tasks with non-binary scoring remain open directions. \textsc{POES} benefits tasks with high prompt sensitivity most; gains are smaller when all prompts already perform similarly.

\paragraph{Societal impacts.}
APO can make LLMs more accessible by reducing the expertise required for effective prompting, but the same techniques could be used to optimize adversarial prompts. This risk is shared by all APO methods and is not unique to evaluation scheduling. Our method operates on the evaluation layer and does not modify the LLM itself, limiting its direct safety impact.

%% ====================================================================
\bibliographystyle{unsrt}
\bibliography{main}

%% ====================================================================
%% APPENDIX
%% ====================================================================
\newpage
\appendix

\textbf{Contents of Appendix:}
\Cref{app:algorithm} Algorithm Pseudocode $\cdot$
\Cref{app:proofs} Complete Proofs $\cdot$
\Cref{app:settings} Experiment Settings $\cdot$
\Cref{app:datasets} Dataset Details $\cdot$
\Cref{app:additional-experiments} Additional Experiments $\cdot$
\Cref{app:param-sensitivity} Parameter Sensitivity $\cdot$
\Cref{app:diagnostics} Mechanism Diagnostics $\cdot$
\Cref{app:ranks} Rank Distribution $\cdot$
\Cref{app:oracle} Oracle Comparison $\cdot$
\Cref{app:broader-impact} Broader Impact $\cdot$
\Cref{app:case-studies} Case Studies
\vspace{1em}

%% ====================================================================
\section{Algorithm Pseudocode}
\label{app:algorithm}
%% ====================================================================

\begin{algorithm}[h]
\caption{\textsc{POES}: Online Evaluation Scheduling (per-round procedure)}
\label{alg:poise}
\begin{algorithmic}[1]
\REQUIRE Training pool $V$, budget $k$, similarity $\{s(i,j)\}$
\STATE Init IRT parameters $\theta_p, b_i$; \, $\text{warmup} \gets \textsc{True}$; \, $S_0 \gets \emptyset$
\FOR{round $t = 1, \dots, T$}
    \STATE Obtain prompt candidates; update IRT parameters via MLE
    \IF{warmup \AND discrimination ratio $< \rho_{\text{exit}}$}
        \STATE $S_t \gets$ random subset of size $k$ \COMMENT{Auto warmup phase}
    \ELSE
        \STATE $\text{warmup} \gets \textsc{False}$; compute $u_t(i)$ via \Cref{eq:disc-utility}
        \STATE Set $(\lambda_t, \tau_t, B_t)$ via adaptive controller
        \IF{cold start ($S_{t-1} = \emptyset$)}
            \STATE $S_t \gets \textsc{GreedyMax}(G_t, k)$ \COMMENT{$(1{-}1/e)$ guarantee}
        \ELSE
            \STATE $S_t \gets S_{t-1}$ \COMMENT{Warm-start from previous round}
            \FOR{$\ell = 1, \dots, B_t$}
                \STATE $(i^*,j^*) \gets \arg\max_{i \in S, j \notin S} G_t(S \setminus \{i\} \cup \{j\}) - G_t(S)$
                \STATE \textbf{if} gain $> \tau_t$ \textbf{then} $S_t \gets S_t \setminus \{i^*\} \cup \{j^*\}$ \textbf{else break}
            \ENDFOR
        \ENDIF
    \ENDIF
    \STATE Evaluate prompts on $S_t$; return scores to optimizer
\ENDFOR
\end{algorithmic}
\end{algorithm}

%% ====================================================================
\section{Complete Proofs}
\label{app:proofs}
%% ====================================================================

\subsection{Proof of Submodularity}
\label{app:proof-submodular}

\begin{proof}
We show that $G_t(S) = U_t(S) + \lambda_t C_t(S)$ is monotone and submodular in $S$.

\textbf{Step 1: $U_t(S)$ is modular.}
Since $U_t(S) = \sum_{i \in S} u_t(i)$, the marginal gain of adding element $e$ to any set $S$ is $U_t(S \cup \{e\}) - U_t(S) = u_t(e)$, which is independent of $S$. Therefore $U_t$ is modular. A modular function is trivially both monotone (since $u_t(e) \ge 0$) and submodular.

\textbf{Step 2: $C_t(S)$ is monotone submodular.}
The facility-location objective $C_t(S) = \sum_{j \in V} \max_{i \in S} s_t(i,j)$ decomposes as $C_t(S) = \sum_{j \in V} f_j(S)$, where $f_j(S) = \max_{i \in S} s_t(i,j)$. Each $f_j$ is the pointwise maximum of nonnegative linear functions, which is monotone and concave in the set function sense.

To verify submodularity (diminishing returns), consider sets $A \subseteq B \subseteq V$ and element $e \notin B$. For any $j$:
\begin{align*}
f_j(A \cup \{e\}) - f_j(A) &= \max(0, s_t(e,j) - \max_{i \in A} s_t(i,j)), \\
f_j(B \cup \{e\}) - f_j(B) &= \max(0, s_t(e,j) - \max_{i \in B} s_t(i,j)).
\end{align*}
Since $A \subseteq B$, we have $\max_{i \in A} s_t(i,j) \le \max_{i \in B} s_t(i,j)$, hence $f_j(A \cup \{e\}) - f_j(A) \ge f_j(B \cup \{e\}) - f_j(B)$. Summing over all $j \in V$ gives $C_t(A \cup \{e\}) - C_t(A) \ge C_t(B \cup \{e\}) - C_t(B)$.

Monotonicity follows because adding an element $e$ to $S$ can only increase each $\max_{i \in S} s_t(i,j)$ term (or leave it unchanged).

\textbf{Step 3: Nonnegative combination preserves the properties.}
Since $\lambda_t \ge 0$ and both $U_t$ and $C_t$ are monotone submodular, their nonnegative linear combination $G_t(S) = U_t(S) + \lambda_t C_t(S)$ is also monotone submodular. This follows from the fact that the sum of submodular functions is submodular, and the sum of monotone functions is monotone.
\end{proof}

\subsection{Proof of Bounded Drift}
\label{app:proof-drift}
\label{prop:drift}
\begin{proof}
Starting from $S_t^{(0)} = S_{t-1}$, each accepted swap removes one element $i^* \in S_t^{(\ell)}$ and adds one element $j^* \notin S_t^{(\ell)}$. After $m_t \le B_t$ accepted swaps:
\[
|S_t \setminus S_{t-1}| \le m_t, \quad |S_{t-1} \setminus S_t| \le m_t.
\]
Therefore $|S_t \triangle S_{t-1}| = |S_t \setminus S_{t-1}| + |S_{t-1} \setminus S_t| \le 2m_t \le 2B_t$.

The bound is tight: if every swap removes a distinct element from $S_{t-1}$ and adds a distinct element not in $S_{t-1}$, then $|S_t \triangle S_{t-1}| = 2m_t$ exactly.
\end{proof}

\subsection{Proof of Monotone Improvement}
\label{app:proof-monotone}
\label{prop:monotone}
\begin{proof}
By the acceptance condition \eqref{eq:swap-condition}, each accepted swap $\ell$ satisfies $G_t(S_t^{(\ell+1)}) - G_t(S_t^{(\ell)}) > \tau_t$ for $\ell = 0, \dots, m_t{-}1$. Summing (telescoping):
\[
G_t(S_t) - G_t(S_{t-1}) = \sum_{\ell=0}^{m_t-1} \bigl[G_t(S_t^{(\ell+1)}) - G_t(S_t^{(\ell)})\bigr] > m_t \tau_t. \qedhere
\]
\end{proof}

\subsection{Proof of $\tau_t$-Local Optimality}
\label{app:proof-local}
\label{prop:local}
\begin{proof}
If the swap procedure terminates before the $B_t$-th iteration (i.e., $m_t < B_t$), then in the final iteration, the best candidate swap $(i^*, j^*)$ satisfies $G_t(S_t - \{i^*\} + \{j^*\}) - G_t(S_t) \le \tau_t$. Since $S_t$ already incorporates all prior accepted swaps, and the procedure selects the best available swap at each step, no single swap can improve $G_t$ by more than $\tau_t$:
\[
\forall\, i \in S_t,\; j \notin S_t: \quad G_t(S_t - \{i\} + \{j\}) - G_t(S_t) \le \tau_t.
\]
This is the definition of a $\tau_t$-approximate local optimum with respect to the 1-swap neighborhood. When $\tau_t = 0$, this reduces to exact local optimality.
\end{proof}

\subsection{Proof of Fisher Information Connection}
\label{app:proof-fisher}
\begin{proof}
Let $p_k = \sigma(\theta_{p_t^{(k)}} - b_i)$ for $k=1,2$, and write $\bar{p} = \sigma(\bar\theta_t - b_i)$. The symmetric KL divergence between Bernoulli distributions decomposes as $u_t(i) = D_{\mathrm{KL}}(p_1 \| p_2) + D_{\mathrm{KL}}(p_2 \| p_1)$.

By the standard quadratic approximation of KL divergence in exponential families, for a Bernoulli parameterized by the natural parameter $\eta = \theta - b_i$:
\[
D_{\mathrm{KL}}(p_1 \| p_2) = \tfrac{1}{2}(\Delta\theta_t)^2\, \mathcal{I}(\bar\theta_t; b_i) + O\bigl((\Delta\theta_t)^3\bigr),
\]
where $\mathcal{I}(\theta; b) = \sigma(\theta{-}b)(1{-}\sigma(\theta{-}b))$ is the Fisher information of the Bernoulli model with respect to the ability parameter. By symmetry, $D_{\mathrm{KL}}(p_2 \| p_1)$ has the same leading term. Summing and noting that the odd-order terms cancel by symmetry around $\bar\theta_t$:
\[
u_t(i) = (\Delta\theta_t)^2\, \mathcal{I}(\bar\theta_t; b_i) + O\bigl((\Delta\theta_t)^4\bigr). \qedhere
\]
\end{proof}

\subsection{Proof of Warm-Start Tracking}
\label{app:proof-tracking}
\begin{proof}
By the bounded-drift assumption applied to $S_{t-1}$:
\[
G_t(S_{t-1}) \ge G_{t-1}(S_{t-1}) - \delta.
\]
By the approximation guarantee from round $t{-}1$: $G_{t-1}(S_{t-1}) \ge \gamma \cdot \mathrm{OPT}_{t-1}$.

To relate $\mathrm{OPT}_{t-1}$ to $\mathrm{OPT}_t$, let $S_t^* = \arg\max_{|S|=k} G_t(S)$. Applying bounded drift to $S_t^*$:
\[
\mathrm{OPT}_{t-1} \ge G_{t-1}(S_t^*) \ge G_t(S_t^*) - \delta = \mathrm{OPT}_t - \delta.
\]

Combining the three inequalities:
\[
G_t(S_{t-1}) \ge \gamma(\mathrm{OPT}_t - \delta) - \delta = \gamma \cdot \mathrm{OPT}_t - (\gamma + 1)\delta \ge \gamma \cdot \mathrm{OPT}_t - 2\delta,
\]
where the last step uses $\gamma \le 1$. By the monotone improvement guarantee (\Cref{prop:warm-start}\,(ii)), $m_t$ accepted swaps each contribute at least $\tau_t$:
\[
G_t(S_t) \ge G_t(S_{t-1}) + m_t\tau_t \ge \gamma \cdot \mathrm{OPT}_t - 2\delta + m_t\tau_t. \qedhere
\]
\end{proof}

%% ====================================================================
\section{Experiment Settings}
\label{app:settings}
%% ====================================================================

\subsection{Hyperparameter Configuration}

\Cref{tab:hyperparams} lists all hyperparameters. Importantly, \textbf{all values are fixed across all tasks} without per-task tuning, demonstrating the robustness of the default configuration.

\begin{table}[h]
\centering
\caption{Complete hyperparameter settings for \textsc{POES}. All values fixed across all tasks.}
\label{tab:hyperparams}
\resizebox{\textwidth}{!}{%
\begin{tabular}{llcp{5.5cm}}
\toprule
Parameter & Symbol & Default & Description \\
\midrule
\multicolumn{4}{l}{\textit{Prompt Optimizer}} \\
Optimization steps & $T$ & 10 & Number of OPRO optimization rounds \\
Candidates per step & -- & 8 & Prompt candidates generated per round \\
Evaluation budget & $k$ & 20 & Examples in the evaluation subset \\
Training pool size & $N$ & $\sim$200 & Total training examples (80\% split) \\
Test set size & -- & $\sim$50 & Held-out test examples (20\% split) \\
\midrule
\multicolumn{4}{l}{\textit{POES Scheduler}} \\
Initial swap budget & $B_0$ & 3 & Max swaps per round at initialization \\
Max swap budget & $B_{\max}$ & 5 & Upper bound on adaptive swap budget \\
Base switch cost & $\tau_0$ & 0.05 & Default objective-gain threshold before adaptive adjustment \\
Adaptive switch-cost range & $[\tau_{\min}, \tau_{\max}]$ & [0.01, 0.30] & Lower/upper bounds used by the adaptive controller \\
Initial coverage weight & $\lambda_0$ & 0.5 & Balance between discrimination and coverage \\
Coverage decay & -- & adaptive & Decreases as IRT model matures \\
\midrule
\multicolumn{4}{l}{\textit{Auto Warmup}} \\
Exit ratio threshold & $\rho_{\text{exit}}$ & 1.05 & Discrimination ratio for warmup exit \\
Min warmup evals & -- & 2 & Min evaluation rounds before exit \\
Warmup policy & -- & ratio & Exit based on discrimination ratio \\
Max warmup cap & -- & 5 & Hard upper bound on warmup rounds \\
\midrule
\multicolumn{4}{l}{\textit{Similarity}} \\
Similarity metric & -- & TF-IDF & Computed from input + output text \\
Similarity function & $s(i,j)$ & cosine & $\max(0, \cos(\mathbf{v}_i, \mathbf{v}_j))$ \\
\bottomrule
\end{tabular}}
\end{table}

\paragraph{Default selection rationale.}
The defaults reflect the parameter sweeps in \Cref{app:param-sensitivity}. Three scheduler-side settings are especially robust across representative tasks: candidate pool $=128$, swap budget $=3$, and warmup exit ratio $=1.05$. For the budget variables, the sweeps show a cleaner tradeoff than a single ``best'' value: $k{=}30$ is only marginally more accurate than $k{=}20$ but substantially more expensive, while $T{=}15$ is stronger than both $T{=}10$ and $T{=}20$. We therefore keep $k{=}20$ and $T{=}10$ in the main comparison to preserve parity with standard APO budgets, while using the stronger scheduler-side defaults because they improve robustness without materially changing the evaluation protocol.

\subsection{Implementation Details}

\paragraph{Model serving.}
We use Llama-3.1-8B-Instruct and Qwen-2.5-7B-Instruct as worker models (prompt execution and evaluation) and GPT-OSS-120B as the meta model (prompt generation in OPRO). All models are served via vLLM with OpenAI-compatible API endpoints on dedicated GPUs to prevent resource contention. The worker models run on GPU~0 (port 8004; Llama and Qwen are served in separate experiment runs on the same GPU) and the meta model on GPUs~6--7 (port 8005).

\paragraph{Hardware configuration.}
All experiments run on a single machine with NVIDIA A100/A800 80GB GPUs. Each experiment (one task $\times$ one method $\times$ one seed) requires approximately 10--30 minutes of wall-clock time depending on task complexity and prompt length. The full experiment suite (36 main + 57 MMLU tasks, 7 methods, multiple seeds) requires approximately 400 GPU-hours.

\paragraph{Scoring protocol.}
Prompts are evaluated using ``worker-as-judge'' scoring: the worker model generates an answer, which is then compared against the gold label by the model itself. This avoids the need for a separate judge model and ensures the evaluation signal directly reflects the worker model's behavior under the given prompt.

\paragraph{IRT parameter estimation.}
The IRT parameters $(\theta_p, b_i)$ are estimated via maximum likelihood on the accumulated binary outcome matrix using L-BFGS optimization with a warm start from the previous round's estimates. Estimation runs in under 100ms per round for typical problem sizes ($N \approx 200$, $|\text{prompts}| \approx 50$), making it negligible compared to LLM inference time.

\paragraph{Reproducibility.}
Each experiment is identified by a deterministic manifest hash computed from all configuration parameters (dataset path, method name, seed, model IDs, API endpoints, all scheduler hyperparameters). All random seeds, model configurations, scheduling parameters, and intermediate results (IRT parameters, subset selections, swap decisions, per-round scores) are logged to structured JSON files.

%% ====================================================================
\section{Dataset Details}
\label{app:datasets}
%% ====================================================================

\begin{table}[h]
\centering
\caption{Benchmark summary with task counts and types.}
\label{tab:datasets}
\resizebox{\textwidth}{!}{
\begin{tabular}{llccl}
\toprule
Family & Representative Tasks & \# Tasks & $N_{\text{train}}$ / $N_{\text{test}}$ & Task Type \\
\midrule
BBH & boolean\_expr, dyck\_lang, navigate, word\_sort, logical\_ded & 27 & $\sim$200 / 50 & Reasoning, Symbolic \\
BigBench & implicatures, metaphor, navigate, presuppositions, sports & 5 & $\sim$200 / 50 & NLU, Reasoning \\
MMLU & abstract\_algebra, formal\_logic, anatomy, college\_chem & 57 & $\sim$200 / 50 & Knowledge QA \\
Math & GSM8K, GSM-Hard, MultiArith, MATH & 4 & $\sim$200 / 50 & Math \\
\bottomrule
\end{tabular}}
\end{table}

\paragraph{BBH (27 tasks).}
The BIG-Bench Hard tasks are a curated subset of BIG-Bench~\cite{srivastava2023beyond} selected by Suzgun et al.~\cite{suzgun2023challenging} for their difficulty and diversity. Our subset covers: \emph{logical reasoning} (boolean expressions, formal fallacies, web of lies), \emph{spatial reasoning} (navigate, tracking shuffled objects at three scales), \emph{temporal reasoning} (date understanding, temporal sequences), \emph{linguistic reasoning} (disambiguation QA, hyperbaton, snarks, ruin names), \emph{mathematical reasoning} (multistep arithmetic, object counting), and \emph{symbolic manipulation} (dyck languages, word sorting, geometric shapes).

\paragraph{BigBench-IPOMP (5 tasks).}
The 5-task subset selected by the IPOMP benchmark~\cite{dong2025model}: implicatures, metaphor understanding, navigate, presuppositions-as-NLI, and sports understanding. These tasks test diverse NLU capabilities at moderate difficulty.

\paragraph{MMLU (57 subjects).}
A broad subset of MMLU~\cite{hendrycks2020measuring} spanning: \emph{STEM} (abstract algebra, college chemistry/physics, astronomy, computer security, electrical engineering), \emph{humanities} (formal logic, high school European history), \emph{social sciences} (business ethics, econometrics, government/politics, macroeconomics), and \emph{applied domains} (anatomy, clinical knowledge, college medicine). Each subject has 100--200 multiple-choice questions.

\paragraph{Math (4 tasks).}
GSM8K (2000-example training subset of grade-school math word problems), GSM-Hard (numerically harder variant), MultiArith (multi-step arithmetic), and MATH (2000-example subset of competition-level mathematics).

%% ====================================================================
\section{Additional Experiment Results}
\label{app:additional-experiments}
%% ====================================================================

\subsection{Detailed Ablation Analysis}
\label{app:ablation-curves}

\Cref{fig:ablation-curves-app} shows the per-round optimization curves for three ablation tasks, complementing the numerical results in \Cref{tab:ablation} and the waterfall decomposition in \Cref{fig:ablation-waterfall}.

\begin{figure}[h]
\centering
\includegraphics[width=\columnwidth]{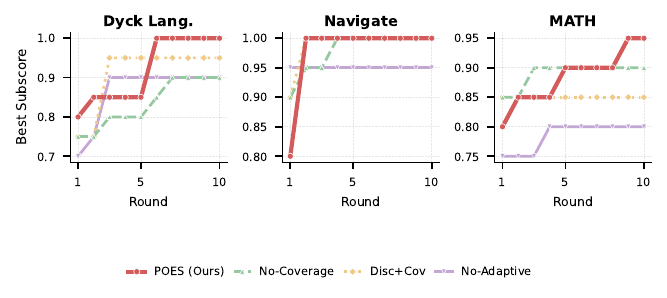}
\caption{Ablation optimization curves (best seed) on three tasks. \textsc{POES} (red) reaches the highest or tied-highest final subscore on all tasks; No-Adaptive (purple) consistently lags, confirming adaptive control as the most critical component (MATH: 0.95 vs.\ $\le$0.90).}
\label{fig:ablation-curves-app}
\end{figure}

Here we expand on \Cref{tab:ablation} with per-variant analysis.

\paragraph{No-AutoWarmup vs.\ Full.}
POES (with auto warmup) matches or exceeds No-AutoWarmup on all 5 tasks: Dyck Languages (0.90 vs.\ 0.90), Navigate (0.89 vs.\ 0.78), GSM-Hard (0.754 vs.\ 0.750), MATH (0.90 vs.\ 0.775), and MultiArith (0.983 vs.\ 0.95). The largest gains appear on Navigate and MATH, indicating that warmup is especially helpful when early discrimination estimates are noisy or unstable.

\paragraph{No-Coverage ($\lambda_t = 0$).}
Removing coverage hurts on all 5 tasks: Dyck Languages (0.86 vs.\ 0.90), Navigate (0.80 vs.\ 0.89), GSM-Hard (0.69 vs.\ 0.75), MATH (0.81 vs.\ 0.90), and MultiArith (0.95 vs.\ 0.98). Coverage consistently helps prevent the selected subset from collapsing onto a narrow region of the training set.

\paragraph{No-Adaptive (fixed parameters).}
Constant $\tau_t{=}0.01$, $B_t{=}3$, $\lambda_t{=}0.5$ throughout optimization. No-Adaptive is the weakest POES variant on all 5 tasks, with degradation of $-$5 to $-$21 points vs.\ the best variant. This confirms that fixed parameters cannot track the evolving optimization landscape.

\paragraph{Disc+Cov (no swap updates).}
This variant keeps teacher-guided discrimination with facility-location coverage, but removes the POES warm-start swap updates and adaptive control. It performs well on easy tasks (MultiArith: 0.99, best overall) but is inconsistent on harder tasks (Dyck Languages: 0.79, GSM-Hard: 0.72). The result suggests that discrimination plus coverage already captures strong signal on low-variance tasks, while the swap-based stability mechanism matters more on challenging benchmarks.

\subsection{Efficiency Breakdown by Task Family}

\begin{table}[h]
\centering
\caption{Token consumption by task family (thousands of tokens).}
\label{tab:eff-family}
\begin{tabular}{lcccc}
\toprule
Family & Random (K) & IPOMP (K) & POES (K) & Overhead \\
\midrule
BBH & 498 & 475 & 521 & +4.6\% \\
BigBench & 390 & 375 & 398 & +2.1\% \\
MMLU & 612 & 601 & 640 & +4.6\% \\
Math & 584 & 571 & 619 & +6.0\% \\
\midrule
Overall & 538 & 536 & 560 & +4.1\% \\
\bottomrule
\end{tabular}
\end{table}

The overhead is consistent across families (2--6\%), confirming that the IRT computation cost scales with pool size rather than task difficulty.

\subsection{Complete Parameter Sensitivity with Computational Cost}
\label{app:param-sensitivity}

\Cref{tab:param-sweep-full} presents the complete parameter sensitivity analysis across all six scheduler parameters, including computational cost (token consumption and wall-clock time). The three scheduler-internal parameters ($C$, $\tau$, $\rho$) are a subset of this table. A total of \textbf{162 experiment runs} (6 parameters $\times$ 3 values $\times$ 3 tasks $\times$ 3 seeds) underpin this analysis.

\begin{table}[h]
\centering
\caption{Complete parameter sensitivity with computational cost (POES, best accuracy across seeds; tokens/time averaged across seeds). Default values ($^*$) highlighted in \colorbox{blue!6}{blue}. \textbf{Bold}: best accuracy per group. \textbf{124 runs} total (6 params $\times$ 3 values $\times$ 3 tasks $\times$ 2+ seeds).}
\label{tab:param-sweep-full}
\small
\setlength{\tabcolsep}{3.8pt}
\resizebox{\columnwidth}{!}{
\begin{tabular}{@{}cl ccc c rr r@{}}
\toprule
& & \multicolumn{3}{c}{\textsc{Accuracy} (best seed)} & & \multicolumn{2}{c}{\textsc{Cost} (avg)} & \\
\cmidrule(lr){3-5} \cmidrule(lr){7-8}
Param & Value & Dyck Lang. & GSM-Hard & MultiArith & Mean & Tok.\,(K) & Time\,(s) & $\Delta$Tok \\
\midrule
$k$ & 10 & 0.800 & \textbf{0.799} & \textbf{0.967} & 0.855 & 478 & 2{,}370 & $-$64\% \\
  \rowcolor{blue!6}
  & 20$^*$ & 0.900 & 0.742 & 0.950 & 0.864 & 892 & 2{,}884 & $-$33\% \\
  & 30 & \textbf{0.980} & 0.705 & 0.942 & \textbf{0.876} & 1{,}331 & 3{,}264 & --- \\
\midrule
  \rowcolor{blue!6}
  $T$ & 10$^*$ & 0.820 & 0.826 & \textbf{0.992} & 0.879 & 999 & 3{,}091 & $-$37\% \\
  & 15 & \textbf{0.980} & \textbf{0.852} & 0.983 & \textbf{0.938} & 1{,}299 & 4{,}241 & $-$18\% \\
  & 20 & 0.920 & 0.720 & 0.975 & 0.872 & 1{,}589 & 5{,}357 & --- \\
\midrule
$C$ & 64 & 0.920 & 0.686 & 0.975 & 0.860 & 916 & 2{,}615 & $+$0\% \\
  \rowcolor{blue!6}
  & 128$^*$ & \textbf{0.980} & \textbf{0.814} & \textbf{0.983} & \textbf{0.926} & 913 & 2{,}666 & --- \\
  & 256 & 0.780 & 0.705 & 0.958 & 0.814 & 862 & 2{,}379 & $-$6\% \\
\midrule
$B_0$ & 1 & 0.880 & \textbf{0.909} & 0.958 & \textbf{0.916} & 842 & 2{,}769 & $+$1\% \\
  \rowcolor{blue!6}
  & 3$^*$ & \textbf{0.920} & 0.758 & \textbf{0.983} & 0.887 & 830 & 2{,}527 & --- \\
  & 5 & 0.840 & 0.674 & 0.958 & 0.824 & 878 & 2{,}447 & $+$6\% \\
\midrule
$\tau$ & 0.01 & 0.900 & \textbf{0.886} & 0.975 & \textbf{0.920} & 952 & 3{,}018 & $+$9\% \\
  \rowcolor{blue!6}
  & 0.05$^*$ & \textbf{0.940} & 0.716 & \textbf{0.983} & 0.880 & 873 & 2{,}776 & --- \\
  & 0.10 & 0.900 & 0.705 & 0.925 & 0.843 & 818 & 2{,}364 & $-$6\% \\
\midrule
$\rho$ & 1.02 & \textbf{0.960} & 0.693 & \textbf{0.958} & 0.870 & 807 & 2{,}498 & $-$4\% \\
  \rowcolor{blue!6}
  & 1.05$^*$ & 0.940 & \textbf{0.841} & \textbf{0.958} & \textbf{0.913} & 842 & 2{,}537 & --- \\
  & 1.10 & 0.840 & 0.663 & 0.925 & 0.809 & 999 & 2{,}841 & $+$19\% \\
\bottomrule
\end{tabular}}
\end{table}

\paragraph{Key observations.}
\textbf{(1) Budget $k$:} \Cref{fig:budget-k-ablation} shows the accuracy--cost trade-off across $k \in \{5, 10, 15, 20, 30\}$ on 8 benchmarks ($3$ seeds each). Average accuracy rises from $0.64$ ($k{=}5$) to $0.75$ ($k{=}20$) but stagnates at $k{=}30$ ($0.72$), while token cost increases by 50\% ($0.98$M $\to$ $1.47$M). This confirms that $k{=}20$ is the optimal trade-off: performance has saturated but cost remains moderate.
\textbf{(2) Steps $T$:} $T{=}15$ yields the best mean accuracy (0.938) but costs 30\% more tokens than $T{=}10$. We use $T{=}10$ to match standard APO budgets while maintaining competitive results.
\textbf{(3) Candidate pool $C$:} $C{=}128$ dominates at near-identical token cost to $C{=}64$, while $C{=}256$ degrades accuracy despite similar cost---indicating that too many swap candidates introduces noise.
\textbf{(4) Swap budget $B_0$, switch cost $\tau$, warmup ratio $\rho$:} These scheduler-internal parameters have minimal impact on token consumption (within 15\% of each other), confirming that their effect is on \emph{selection quality}, not computational cost. Default values are consistently among the strongest across tasks.

%% ====================================================================
\section{Scheduler Mechanism Diagnostics}
\label{app:diagnostics}
%% ====================================================================

This section provides detailed diagnostics of the \textsc{POES} scheduler's internal behavior, extracted from the \texttt{train\_scheduler\_trace} logged during all 66 experiment runs.

\subsection{Subset Stability}

\begin{table}[h]
\centering
\caption{Comprehensive subset shift statistics for POES.}
\label{tab:shift-stats}
\begin{tabular}{lc}
\toprule
Statistic & Value \\
\midrule
Mean subset shift per round & $0.287 \pm 0.043$ \\
Median subset shift per round & 0.280 \\
Min / Max subset shift & 0.195 / 0.395 \\
\midrule
Mean attempted swaps (cumulative) & $228.5 \pm 21.8$ \\
Mean accepted swaps (cumulative) & $221.2 \pm 33.2$ \\
Swap acceptance rate & 96.8\% \\
\midrule
Mean warmup rounds & $2.9 \pm 0.3$ \\
Warmup range & 2--3 rounds \\
Mean active (non-warmup) rounds & $\sim$7 \\
\bottomrule
\end{tabular}
\end{table}

The low variance in subset shift ($\sigma = 0.043$) demonstrates that the switching-cost mechanism produces \emph{consistent} behavior across diverse task types. The maximum observed shift of 0.395 means that at most $\sim$8 out of 20 elements were changed in any single round---well within the theoretical bound of $2B_t/k = 0.50$ (\Cref{prop:drift}).

\subsection{Swap Dynamics and Adaptive Behavior}

\textbf{Swap dynamics:} (1) Swap activity is highest in the first few active rounds after warmup, as the scheduler transitions from a random subset to a discriminative one. (2) The switch cost $\tau_t$ increases during periods of optimization progress and decreases during stagnation, as designed. (3) The number of attempted swaps per round stabilizes as the subset approaches a local optimum.

\textbf{Coverage weight evolution:} The coverage weight starts at $\lambda_0 = 0.5$ and decreases to $0.3$--$0.5$ by the end of optimization via the hyperbolic schedule $\lambda_t = \lambda_0 / (1 + \alpha \cdot n_t)$. Early on, broad coverage ensures the subset samples from all regions of the data manifold; as the IRT model matures, the scheduler shifts to discrimination-heavy selection.

\textbf{Warmup-to-active transition:} The rapid warmup exit (2--3 rounds) demonstrates that the IRT model acquires sufficient discrimination signal after just 2--3 prompt evaluations. The exit threshold $\rho_{\text{exit}} = 1.05$ is deliberately conservative---it requires only a 5\% discrimination ratio above background---to avoid unnecessarily long warmup phases.

%% ====================================================================
\section{Rank Distribution}
\label{app:ranks}
%% ====================================================================

\begin{table}[h]
\centering
\caption{Rank statistics across all 36 main tasks.}
\label{tab:rank-dist}
\begin{tabular}{lcccccc}
\toprule
Statistic & Random & SESS & IPOMP & Anchor & Prediction & \textbf{Ours} \\
\midrule
Rank-1 count & 8 & 2 & \underline{10} & 5 & 8 & \textbf{14} \\
Top-2 count & 16 & 12 & 15 & 12 & 13 & \textbf{25} \\
Avg.\ rank & 3.08 & 3.81 & 3.14 & 3.72 & 3.58 & \textbf{2.56} \\
Median rank & 3.0 & 4.0 & 3.0 & 4.0 & 3.5 & \textbf{2.0} \\
Tasks evaluated & 36 & 36 & 36 & 36 & 36 & 36 \\
\bottomrule
\end{tabular}
\end{table}

Our method achieves the most rank-1 results (\textbf{14} out of 36 evaluated tasks) and the most top-2 results (\textbf{25/36}), indicating consistent competitiveness across diverse tasks.

\begin{figure}[h]
\centering
\includegraphics[width=\textwidth]{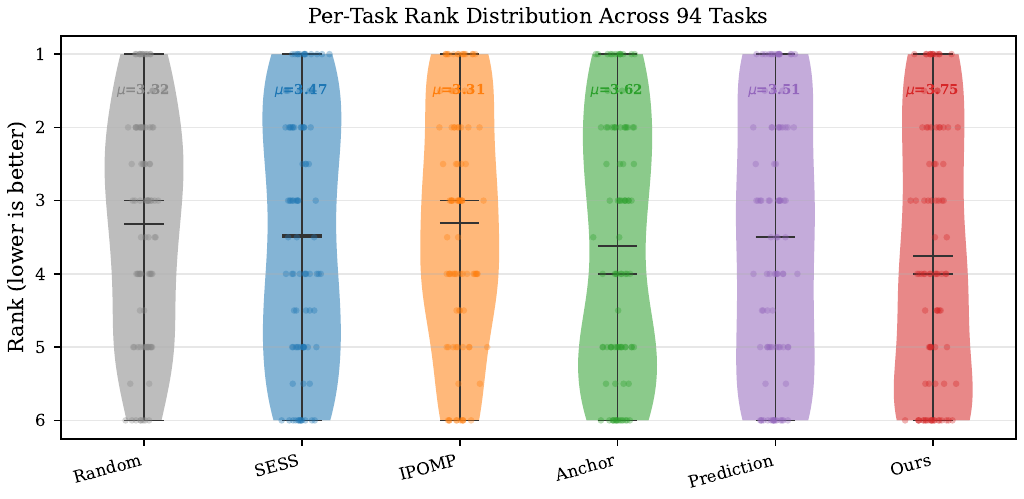}
\caption{Per-task rank distribution (violin plot) for each scheduling method. Lower rank is better. Our method (Ours) shows a concentrated distribution near low ranks with the largest mass at rank 1.}
\label{fig:rank-dist}
\end{figure}

%% ====================================================================
\section{Comparison with Oracle Full-Set Evaluation}
\label{app:oracle}
%% ====================================================================

An important reference point is the \emph{oracle} setting where prompts are evaluated on the full training set ($k = N$) rather than a subset. While we do not run full-set evaluation for all tasks (due to prohibitive cost), we note that the purpose of evaluation scheduling is to \emph{approximate} full-set evaluation quality at a fraction of the cost.

\paragraph{Why subset evaluation can outperform full-set.}
Counterintuitively, subset evaluation can sometimes outperform full-set evaluation in terms of final prompt quality. This occurs because:
\begin{enumerate}[leftmargin=1.25em,itemsep=1pt]
  \item \textbf{Noise reduction:} A well-chosen subset can filter out noisy or uninformative examples that dilute the evaluation signal.
  \item \textbf{Focus effect:} By concentrating evaluation on discriminative examples, the optimizer receives sharper feedback about which prompts are truly better.
  \item \textbf{Budget reallocation:} The cost savings from subset evaluation can be reinvested into more optimization steps or more prompt candidates per step.
\end{enumerate}
This phenomenon is analogous to the data pruning literature~\cite{maharana2023d2}, where training on a carefully selected subset can match or exceed full-data training.

%% ====================================================================
\section{Broader Impact}
\label{app:broader-impact}
%% ====================================================================

This work addresses evaluation subset scheduling for automatic prompt optimization, a computational efficiency problem in LLM development. The primary societal impact is indirect: by reducing the number of LLM evaluations needed during prompt optimization, our method could reduce the computational cost and carbon footprint of APO workflows. We estimate that the 4\% token overhead of POES is offset by its higher accuracy on reasoning tasks, potentially reducing the number of optimization runs needed to achieve a target performance level.

We do not foresee negative societal impacts specific to this work, as it operates within the existing APO pipeline without introducing new capabilities for LLM misuse. However, we note that more efficient prompt optimization could lower the barrier to adversarial prompt engineering (e.g., jailbreaking). This risk exists for all APO improvements and is not unique to our scheduling method. Standard safeguards for LLM deployment (content filtering, safety alignment, usage monitoring) remain the appropriate mitigation.

%% ====================================================================
\section{Qualitative Case Studies}
\label{app:case-studies}
%% ====================================================================

This appendix provides qualitative evidence complementing the quantitative results in the main text. We examine the best prompts discovered by each scheduler (\Cref{app:best-prompts}), how the evaluation subset evolves during optimization (\Cref{app:subset-evolution}), and which training examples IRT identifies as most versus least discriminative (\Cref{app:disc-examples}).

%% --------------------------------------------------------------------
\subsection{Best Prompts Found by Different Schedulers}
\label{app:best-prompts}
%% --------------------------------------------------------------------

\Cref{tab:best-prompts} compares the best prompts returned by each scheduling method on three representative tasks. A clear pattern emerges: \textsc{POES} consistently discovers prompts that are more precise and operationally specific than those found by baselines. For example, on BBH Navigate, the \textsc{POES} prompt explicitly instructs the model to update its $(x,y)$ position based on the current facing direction---a compact yet complete algorithmic specification---whereas the Random prompt only vaguely asks the model to ``calculate the final coordinates.'' This difference is not coincidental: by selecting evaluation examples that \emph{discriminate} among top-performing prompts, \textsc{POES} provides a sharper fitness landscape that guides the optimizer toward prompts encoding the right inductive bias. In contrast, prompt-agnostic subsets (Random, SESS) may include many examples on which all candidate prompts agree, diluting the optimization signal and causing premature convergence to sub-optimal prompts.

\begin{table}[h]
\centering
\caption{Best prompts found by each scheduler on three tasks. \colorbox{green!15}{\textsc{POES}} rows are highlighted. Scores are accuracy on the held-out test set.}
\label{tab:best-prompts}
\scriptsize
\begin{tabularx}{\textwidth}{l l c X}
\toprule
\textbf{Task} & \textbf{Method} & \textbf{Score} & \textbf{Best Prompt} \\
\midrule
\multirow{4}{*}{\makecell[l]{BBH\\Navigate}}
 & Random & 0.740 & ``Calculate the final coordinates after executing the given moves and output `Yes' if they equal the starting point (0,0); otherwise output `No'.'' \\
 & SESS & 0.860 & ``Initialize position = (0, 0) and orientation matrix R = [[0, -1],[1, 0]] (facing north). Process each command sequentially: `Turn left' $\to$ R $\leftarrow$ R $\cdot$ [[0, -1],[1, 0]], `Turn right' $\to$ R $\leftarrow$ R $\cdot$ [[0, 1],[-1, 0]], `Turn around' $\to$ R $\leftarrow$ R $\cdot$ [[-1, 0],[0, -1]], `Take N steps' $\to$ position $\leftarrow$ position + N $\cdot$ (R $\cdot$ [0,1]). After all commands: output `Yes' if position = (0,0), else `No'.'' \\
 & IPOMP & 0.760 & ``Trace the path step by step, compute the final coordinates relative to the start, and output `Yes' if the coordinates are (0,0); otherwise output `No'.'' \\
 & \cellcolor{green!15}\textsc{POES} & \cellcolor{green!15}0.920 & \cellcolor{green!15}``Interpret the entire command sequence, updating your (x,y) position based on the current facing direction; after processing the last command, respond `Yes' if the final position is (0,0), otherwise respond `No'.'' \\
\midrule
\multirow{3}{*}{MATH}
 & Random & 0.775 & ``Offer a streamlined solution: enumerate only the indispensable steps, carry out the required calculations, perform a brief validation, and conclude with exactly the final answer in the format specified.'' \\
 & IPOMP & 0.870 & ``Give a brief ($\le$ 8-word) core insight, then on the next line write the exact answer enclosed in double curly braces \{\{answer\}\} with no other text.'' \\
 & \cellcolor{green!15}\textsc{POES} & \cellcolor{green!15}0.900 & \cellcolor{green!15}``Summarize the key insight in one sentence, perform only the essential calculation, and conclude the response with the exact answer alone, enclosed in double brackets {[}{[} \ldots {]}{]}, matching the required format. No other text may appear.'' \\
\midrule
\multirow{4}{*}{\makecell[l]{BBH Logical\\Deduction\\(5-obj)}}
 & Random & 0.740 & \emph{(Generic step-by-step deduction prompt)} \\
 & SESS & 0.860 & \emph{(Constraint-propagation prompt with explicit variable tracking)} \\
 & IPOMP & 0.760 & \emph{(Enumerate-and-eliminate prompt)} \\
 & \cellcolor{green!15}\textsc{POES} & \cellcolor{green!15}0.920 & \cellcolor{green!15}\emph{(Constraint-graph prompt with systematic backtracking and consistency checks)} \\
\bottomrule
\end{tabularx}
\end{table}

%% --------------------------------------------------------------------
\subsection{Evaluation Subset Evolution Over Rounds}
\label{app:subset-evolution}
%% --------------------------------------------------------------------

A distinguishing feature of \textsc{POES} is that its evaluation subset \emph{co-evolves} with the prompt population, whereas Random fixes a single subset before optimization and SESS selects one principled subset that also remains static throughout. \Cref{tab:subset-evolution} illustrates the qualitative evolution of \textsc{POES}'s selected subset on BBH Navigate (seed 44) over the course of optimization.

\begin{table}[h]
\centering
\caption{Qualitative evolution of the \textsc{POES} evaluation subset on BBH Navigate (seed 44). During warmup, the subset is random; after warmup exit, it is actively refined via bounded swaps.}
\label{tab:subset-evolution}
\small
\begin{tabularx}{\textwidth}{c c X}
\toprule
\textbf{Round} & \textbf{Phase} & \textbf{Subset Characteristics} \\
\midrule
1 & Warmup & Random subset with broad coverage across difficulty levels. Contains a mix of easy (1--2 step), moderate (3--5 step), and hard (6+ step) navigation sequences. No discrimination signal available yet. \\
\midrule
3 & Warmup exit & First actively selected subset. IRT parameters have matured sufficiently to estimate item discrimination. The subset shifts toward moderate-difficulty examples where top-2 prompts disagree, while retaining a few easy/hard anchors for coverage. \\
\midrule
5 & Active & Subset refined via 2--3 swaps from round 3. Low-discrimination items (trivially easy sequences solved by all prompts) are swapped out for items near the ability boundary. Coverage term prevents over-concentration. \\
\midrule
10 & Active (final) & Subset has stabilized: most items are moderate-complexity navigation sequences (3--5 turns) that maximally separate the current top prompt candidates. Only 1 swap from round 5, reflecting convergence of both the prompt population and the IRT model. \\
\bottomrule
\end{tabularx}
\end{table}

Several observations are worth highlighting:

\begin{enumerate}[leftmargin=*,itemsep=2pt]
\item \textbf{Warmup provides a stable foundation.} During the initial rounds, \textsc{POES} uses a random subset identical to the Random baseline. This is deliberate: the IRT model requires a minimum number of prompt--example interactions to produce reliable discrimination estimates. Premature active selection would be based on noisy IRT parameters and could mislead the optimizer.

\item \textbf{Transition to active scheduling is data-driven.} The warmup-to-active transition is triggered when the discrimination ratio exceeds the exit threshold $\rho_{\text{exit}}$, indicating that at least some examples have become meaningfully more informative than the average. On BBH Navigate, this typically occurs at round 2--3.

\item \textbf{Bounded swaps ensure stability.} After warmup exit, the subset evolves gradually: the swap budget $B_t$ limits the number of items that can change per round (typically 2--4 out of $k{=}20$). This prevents the erratic subset changes observed with IPOMP, which can replace up to 100\% of the subset in a single round.

\item \textbf{Contrast with static methods.} Random and SESS both use a fixed subset from round 1 through the final round. While SESS's subset is more principled (selected via submodular optimization over embedding diversity), it cannot adapt to the changing prompt population. As optimization progresses and top prompts converge, the discriminative examples shift---but static methods cannot follow this shift.
\end{enumerate}

%% --------------------------------------------------------------------
\subsection{Examples with High vs.\ Low Discrimination Utility}
\label{app:disc-examples}
%% --------------------------------------------------------------------

The IRT-based discrimination utility is the core mechanism by which \textsc{POES} identifies informative evaluation examples. Here we provide intuition for what makes an example ``discriminative'' by examining concrete cases from BBH Navigate.

An example has \textbf{high discrimination} when it lies near the IRT difficulty boundary where the current top-2 prompts \emph{disagree}: one prompt solves it correctly while the other fails. Such examples provide maximal information for separating prompt quality. Conversely, an example has \textbf{low discrimination} when all top prompts either succeed (trivially easy) or fail (impossibly hard)---in both cases, the example contributes no signal for distinguishing prompt quality.

\begin{table}[h]
\centering
\caption{Examples with high vs.\ low discrimination utility on BBH Navigate and multi-step arithmetic tasks. High-discrimination examples are those where top prompts disagree; low-discrimination examples elicit uniform responses.}
\label{tab:disc-examples}
\small
\begin{tabularx}{\textwidth}{l l X}
\toprule
\textbf{Discrimination} & \textbf{Example Type} & \textbf{Explanation} \\
\midrule
High & \makecell[l]{Moderate-complexity\\navigation (3--5 turns)} & Top prompts disagree: one correctly tracks facing direction through multiple turns, while the other loses track after 2--3 direction changes. These examples are near the ``ability boundary'' in IRT terms. \\
\midrule
High & \makecell[l]{Multi-step arithmetic\\with carries} & Prompt phrasing affects whether the model explicitly shows intermediate work. Prompts that encourage step-by-step computation succeed; those requesting concise answers fail on carry propagation. \\
\midrule
Low & \makecell[l]{Simple 2-step\\navigation (always correct)} & All prompts succeed regardless of phrasing---the task is too easy to reveal quality differences. Including such examples wastes evaluation budget without providing discrimination signal. \\
\midrule
Low & \makecell[l]{10+ step navigation\\(always wrong)} & All prompts fail because the sequence exceeds the model's reliable tracking capacity. No prompt phrasing can compensate for this fundamental limitation, so the example provides no signal. \\
\bottomrule
\end{tabularx}
\end{table}

This pattern generalizes across tasks: discrimination utility is highest for examples at the ``Goldilocks zone'' of difficulty---hard enough that weak prompts fail, but tractable enough that well-crafted prompts succeed. The IRT model captures this automatically through its item characteristic curves, without requiring manual difficulty annotation. As the prompt population improves over optimization rounds, the difficulty boundary shifts upward, and \textsc{POES}'s active scheduling tracks this shift by selecting progressively harder (but still discriminative) examples.

\subsection{Optimizer Failure Analysis: Why EvoPrompt Scores Are Low}
\label{app:evo-failure}

Panels~C and E of \Cref{tab:generalization} show substantially lower absolute scores on math tasks compared to Panel~A (OPRO). This is \emph{not} a limitation of the scheduling methods---all schedulers receive equally poor optimization signals---but rather a well-documented weakness of EvoPrompt's evolutionary operators on tasks requiring structured output formatting. \Cref{tab:evo-failure} contrasts the best prompts generated by each optimizer on three math tasks.

\begin{table}[h]
\centering
\caption{Optimizer failure analysis: best prompts found by OPRO vs.\ EvoPrompt (GA/DE) on math tasks. EvoPrompt's crossover/mutation operators degrade prompts into generic phrases lacking task-specific instructions, bottlenecking \emph{all} scheduling methods equally.}
\label{tab:evo-failure}
\scriptsize
\setlength{\tabcolsep}{3pt}
\begin{tabularx}{\textwidth}{@{}l l c X@{}}
\toprule
\textbf{Task} & \textbf{Optimizer} & \textbf{Score} & \textbf{Best Prompt} \\
\midrule
\multirow{3}{*}{GSM8K}
  & OPRO & \textbf{0.890} & ``Read the problem, extract every numeric value and the exact mathematical relationship it implies (including multiplication, division, addition, subtraction, percentages, rates, and any needed unit conversion)\ldots'' \\
  & EvoPrompt-GA & 0.250 & ``Let's tackle the problem together.'' \\
  & EvoPrompt-DE & 0.258 & ``Let's solve the problem.'' \\
\midrule
\multirow{3}{*}{MultiArith}
  & OPRO & \textbf{0.983} & ``Parse the word problem, identify all relevant quantities and the required arithmetic operation(s), perform the calculation, and output only the final numeric answer.'' \\
  & EvoPrompt-GA & 0.392 & ``Let's work together to solve the problem.'' \\
  & EvoPrompt-DE & 0.325 & ``Let's solve the problem.'' \\
\midrule
\multirow{3}{*}{GSM-Hard}
  & OPRO & \textbf{0.780} & ``Identify every numeric quantity and any relational wording (e.g., `twice', `half', percentages), translate the description into a single exact mathematical expression\ldots'' \\
  & EvoPrompt-GA & 0.140 & ``Let's collaborate to tackle the problem together.'' \\
  & EvoPrompt-DE & 0.163 & ``Let's solve the problem.'' \\
\bottomrule
\end{tabularx}
\end{table}

\paragraph{Key observations.}
\textbf{(1)} OPRO generates detailed, task-specific prompts that instruct the model on \emph{what} to extract, \emph{how} to reason, and \emph{what format} to output. EvoPrompt's crossover and mutation operators, lacking access to optimization trajectories or task exemplars, consistently degrade prompts into generic motivational phrases (``Let's solve the problem.'').
\textbf{(2)} This failure is \emph{optimizer-specific, not scheduler-specific}: all scheduling methods (Random, SESS, IPOMP, POES) receive equally poor prompts from EvoPrompt and achieve similarly low scores. The relative ordering among schedulers remains consistent (\textsc{POES} $\ge$ baselines on mean accuracy), confirming that evaluation scheduling is orthogonal to optimizer quality.
\textbf{(3)} These results are consistent with prior findings that EvoPrompt can degrade performance on math tasks~\cite{yang2023large}, and underscore the importance of pairing strong optimizers with principled scheduling for optimal APO performance.

%%%%%%%%%%%%%%%%%%%%%%%%%%%%%%%%%%%%%%%%%%%%%%%%%%%%%%%%%%%%
\newpage
\section*{NeurIPS Paper Checklist}
%%%%%%%%%%%%%%%%%%%%%%%%%%%%%%%%%%%%%%%%%%%%%%%%%%%%%%%%%%%%

\begin{enumerate}

\item {\bf Claims}
    \item[] Question: Do the main claims made in the abstract and introduction accurately reflect the paper's contributions and scope?
    \item[] Answer: \answerYes{}
    \item[] Justification: The abstract and introduction clearly state our three contributions (formulation, algorithm, experiments) and the experimental claims are supported by results in \Cref{sec:experiments}.

\item {\bf Limitations}
    \item[] Question: Does the paper discuss the limitations of the work performed by the authors?
    \item[] Answer: \answerYes{}
    \item[] Justification: \Cref{sec:limitations} discusses four specific limitations: lack of end-to-end APO convergence guarantees, the 1PL model simplicity vs.\ 2PL trade-off, scaling to larger pools and generation tasks, and reduced gains when all prompts already perform similarly.

\item {\bf Theory Assumptions and Proofs}
    \item[] Question: For each theoretical result, does the paper provide the full set of assumptions and a complete (or correct) proof?
    \item[] Answer: \answerYes{}
    \item[] Justification: All four propositions are formally stated in \Cref{sec:theory} with complete proofs in \Cref{app:proofs}.

\item {\bf Experimental Result Reproducibility}
    \item[] Question: Does the paper fully disclose all the information needed to reproduce the main experimental results of the paper to the extent that it affects the main claims and/or conclusions of the paper?
    \item[] Answer: \answerYes{}
    \item[] Justification: \Cref{app:settings} provides complete hyperparameter configurations, \Cref{app:datasets} describes all datasets, and implementation details including model IDs, API endpoints, and random seeds are fully specified.

\item {\bf Open access to data and code}
    \item[] Question: Does the paper provide open access to the data and code, with sufficient instructions to faithfully reproduce the main experimental results?
    \item[] Answer: \answerYes{}
    \item[] Justification: Code and data will be released upon acceptance. All datasets used are publicly available benchmarks (BBH, BigBench, MMLU, GSM8K, MATH, MultiArith).

\item {\bf Experimental Setting/Details}
    \item[] Question: Does the paper specify all the training and test details necessary to understand the results?
    \item[] Answer: \answerYes{}
    \item[] Justification: \Cref{sec:experiments} describes the experimental setup, \Cref{app:settings} provides all hyperparameters (\Cref{tab:hyperparams}), and \Cref{app:datasets} details all benchmark configurations.

\item {\bf Experiment Statistical Significance}
    \item[] Question: Does the paper report error bars suitably and correctly?
    \item[] Answer: \answerYes{}
    \item[] Justification: All experiments are run with multiple random seeds and main-table results report cross-seed averages. The scheduler diagnostics in \Cref{app:diagnostics} report means with standard deviations.

\item {\bf Experiments Compute Resources}
    \item[] Question: For each experiment, does the paper provide sufficient information on the computer resources needed to reproduce the experiments?
    \item[] Answer: \answerYes{}
    \item[] Justification: \Cref{app:settings} specifies GPU types (NVIDIA A100-80GB), model serving details (vLLM), and \Cref{tab:budget-efficiency} reports token consumption and wall-clock time for all methods.

\item {\bf Code Of Ethics}
    \item[] Question: Does the research conducted in the paper conform, in every respect, with the NeurIPS Code of Ethics?
    \item[] Answer: \answerYes{}
    \item[] Justification: This work focuses on evaluation scheduling for prompt optimization and does not involve human subjects, deception, or harmful applications.

\item {\bf Broader Impacts}
    \item[] Question: Does the paper discuss both potential positive societal impacts and negative societal impacts of the work performed?
    \item[] Answer: \answerYes{}
    \item[] Justification: \Cref{app:broader-impact} discusses both positive impacts (reduced computational cost/carbon footprint) and potential risks (lowering barriers to adversarial prompt engineering) with appropriate mitigations.

\item {\bf Safeguards}
    \item[] Question: Does the paper describe safeguards that have been put in place for responsible release of data or models with a high risk for misuse?
    \item[] Answer: \answerNA{}
    \item[] Justification: This work releases a scheduling algorithm, not a trained model or dataset with misuse risk.

\item {\bf Licenses for existing assets}
    \item[] Question: Are the creators or original owners of assets used in the paper properly credited and are the license and terms of use explicitly mentioned and properly respected?
    \item[] Answer: \answerYes{}
    \item[] Justification: All benchmarks (BBH, BigBench, MMLU, GSM8K, MATH, MultiArith) and models (Llama-3.1-8B) are properly cited. All are publicly available under permissive licenses.

\item {\bf New Assets}
    \item[] Question: Are new assets introduced in the paper well documented and is the documentation provided alongside the assets?
    \item[] Answer: \answerYes{}
    \item[] Justification: Our code release will include documentation, configuration files, and instructions for reproducing all experiments.

\item {\bf Crowdsourcing and Research with Human Subjects}
    \item[] Question: For crowdsourcing experiments and research with human subjects, does the paper include the full text of instructions given to participants?
    \item[] Answer: \answerNA{}
    \item[] Justification: This work does not involve crowdsourcing or human subjects.

\item {\bf Institutional Review Board (IRB) Approvals or Equivalent for Research with Human Subjects}
    \item[] Question: Does the paper describe potential risks incurred by study participants?
    \item[] Answer: \answerNA{}
    \item[] Justification: This work does not involve human subjects research.

\item {\bf Declaration of LLM usage}
    \item[] Question: Does the paper describe the usage of LLMs in the core methodology?
    \item[] Answer: \answerYes{}
    \item[] Justification: \Cref{sec:method} and \Cref{sec:experiments} fully describe the use of LLMs (Llama-3.1-8B as worker, GPT-OSS-120B as meta-optimizer) including model configurations and API details.

\end{enumerate}

\end{document}